\documentclass[reqno]{amsart}

\usepackage{amsmath, amsthm, amsfonts,amssymb}
\usepackage[square,sort,comma,numbers]{natbib}
\usepackage{graphicx, color}
\usepackage{subfigure}
\usepackage{epstopdf}
\usepackage{float}
\usepackage{graphicx}
\usepackage{amsmath}
\usepackage{amsfonts}
\usepackage{amssymb}
\usepackage{amsthm}
\usepackage{bm,bbm}
\usepackage[colorlinks=true,linkcolor=blue, citecolor=blue]{hyperref}

%

\theoremstyle{plain}
\newtheorem{thm}{Theorem}[section]

\newtheorem{lem}[thm]{Lemma}
\newtheorem{assu}[thm]{Assumption}

\newtheorem{defn}[thm]{Definition}

\theoremstyle{remark}
\newtheorem{rem}[thm]{Remark}

\newcommand{\xb}{\mathbf{x}}
\newcommand{\OO}{\mathrm{O}}
\newcommand{\Ob}{\mathbf{O}}
\newcommand{\sd}{\mathsf{d}}
\newcommand{\rSh}{\mathrm{Sh}}
\newcommand{\yb}{\mathbf{y}}
\newcommand{\zb}{\mathbf{z}}
\newcommand{\Kb}{\mathbf{K}}
\newcommand{\Eb}{\mathbf{E}}
\newcommand{\Lb}{\mathbf{L}}

\newcommand{\ub}{\mathbf{u}}
\newcommand{\wb}{\mathbf{w}}
\newcommand{\Wb}{\mathbf{W}}

\newcommand{\Rb}{\mathbf{R}}
\newcommand{\Hb}{\mathbf{H}}
\newcommand{\Tb}{\mathbf{T}}
\newcommand{\Ab}{\mathbf{A}}
\newcommand{\Ub}{\mathbf{U}}
\newcommand{\Db}{\mathbf{D}}
\newcommand{\Sb}{\mathbf{S}}
\newcommand{\Qb}{\mathbf{Q}}
\newcommand{\Zb}{\mathbf{Z}}
\newcommand{\Xb}{\mathbf{X}}

\newcommand{\Yb}{\mathbf{Y}}
\newcommand{\Pb}{\mathbf{P}}
\newcommand{\Nb}{\mathbf{N}}
\newcommand{\dd}{\mathrm{d}}

\newcommand{\Ib}{\mathbf{I}}


\date{}
\begin{document}
\title[]{How do kernel-based sensor fusion algorithms behave under high dimensional noise? }

\author{Xiucai Ding}
\address{Department of Statistics, University of California, Davis, CA, USA}
\email{xcading@ucdavis.edu}

\author{Hau-Tieng Wu}
\address{Department of Mathematics and Department of Statistical Science, Duke University, Durham, NC, USA}
\email{hauwu@math.duke.edu}

\maketitle

\begin{abstract} 
We study the behavior of two kernel based sensor fusion algorithms, nonparametric canonical correlation analysis (NCCA) and alternating diffusion (AD), under the nonnull setting that the clean datasets collected from two sensors are modeled by a common low dimensional manifold embedded in a high dimensional Euclidean space and the datasets are corrupted by high dimensional noise. We establish the asymptotic limits and convergence rates for the eigenvalues of the associated kernel matrices assuming that the sample dimension and sample size are comparably large, where NCCA and AD are conducted using the Gaussian kernel. 
It turns out that both the asymptotic limits and convergence rates depend on the signal-to-noise ratio (SNR) of each sensor and selected bandwidths. On one hand, we show that if NCCA and AD are directly applied to the noisy point clouds without any sanity check, it may generate artificial information that misleads scientists' interpretation.  On the other hand, we prove that if the bandwidths are selected adequately, both NCCA and AD can be made robust to high dimensional noise when the SNRs are relatively large. \end{abstract}

\section{Introduction}

How to adequately quantify the system of interest by assembling available information from multiple datasets collected simultaneously from different sensors is a long lasting and commonly encountered problem in data science. This problem is commonly referred to as the \emph{sensor fusion} problem \cite{6788402,7214350,gustafsson2012statistical,ZHAO201743}. While the simplest approach to ``fuse'' the information is via a simple concatenation of available information from each sensor, it is not the best and most efficient approach. To achieve a better and more efficient fusion algorithm, researchers usually face several challenges. For example, the sensors might be heterogeneous, datasets from different sensors might not be properly aligned, the datasets might be high dimensional and noisy, to name but a few. Roughly speaking, researchers are interested in extracting common components (information) shared by different sensors, if there is any, where roughly speaking.

A lot of effort was invested to find a satisfactory algorithm based on various models. Historically, when we can safely assume a linear structure in the common information shared by different sensors, the most typical algorithm in handling this problem is the canonical correlation analysis (CCA) \cite{hotelling1936relations} and its descendants \cite{Horst1961,6788402,Hwang2013}, which is a far from complete list. 
In the modern data analysis era, due to the advance of sensor development and growth of the complexities of problems, researchers may need to take the nonlinear structure of the datasets into account to better understand the datasets. 
To handle this nonlinear structure, several nonlinear sensor fusion algorithms have been developed, for example, nonparametric canonical correlation analysis (NCCA) \cite{NCCApaper}, alternative diffusion (AD) \cite{LEDERMAN2018509, TALMON2019848} and its generalization \cite{MR4010764}, time coupled diffusion maps \cite{marshall2018time}, multiview diffusion maps \cite{lindenbaum2020multi}, etc. 
See \cite{MR4010764} for a recent and more thoughtful list of available nonlinear tools and \cite{zhuang2020technical} for a recent review.
The main idea beyond these developments is that the nonlinear structure is modeled by various nonlinear geometric structures, and the algorithms are designed to preserve and capture this nonlinear structure.
Such ideas and algorithms have been successfully applied to many real world problems, like audio-visual voice activity detection \cite{8281539}, the study of the sequential audio-visual correspondence \cite{7558246}, automatic sleep stage annotation from two electroencephalogram signals \cite{liu2020diffuse}, seismic event modeling \cite{lindenbaum2018multiview}, fetal electrocardiogram analysis \cite{MR4010764} and IQ prediction from two fMRI paradigms \cite{xiao2019manifold}, which is a far from complete list.

While these kernel-based sensor fusion algorithms have been developed and applied for a while, there are still several gaps toward a solid practical application and sound theoretical understanding of these tools. One important gap is understanding how the inevitable noise, particularly when the data dimension is high, impacts the kernel-based sensor fusion algorithms. For example, can we be sure if the obtained fused information is really informative, particularly when the datasets are noisy or when one sensor is broken? If the signal-to-noise ratios of two sensors are different, how does these noises impact the information captured by these kernel based sensors? To our knowledge, the developed kernel-based sensor fusion algorithms do not take care of how the noise interacts with the algorithm, and most theoretical understandings are mainly based on the nonlinear data structure without considering the impact of high dimensional noise, except a recent effort in the null case \cite{DW1}. 
In this paper, we focus on one specific challenge among many; that is, we study how high dimensional noise impacts the spectrum of two kernel-based sensor fusion algorithms, NCCA and AD, in the {\em non-null setup} when there are two sensors. 

We briefly recall the NCCA and AD algorithms. Consider two noisy point clouds, $\mathcal{X}=\{\mathbf{x}_i\}_{i=1}^n\subset \mathbb{R}^{p_1}$ and $\mathcal{Y}=\{\mathbf{y}_j\}_{j=1}^n\subset \mathbb{R}^{p_2}$. 
For some bandwidths $h_1, h_2>0$ and some fixed constant $\upsilon>0$ chosen by the user, we consider two $n \times n$ \emph{affinity matrices}, $\Wb_1$ and $\Wb_2$, defined as 
\begin{equation}\label{eq_affinitymatrix}
\Wb_1(i,j)=\exp\left( -\upsilon \frac{\| \xb_i-\xb_j \|_2^2}{h_1}\right) \ \ \mbox{and} \ \  \Wb_2(i,j)=\exp\left( -\upsilon \frac{\| \yb_i-\yb_j \|_2^2}{h_2}\right),
\end{equation}
where $i,j=1,\ldots,n$. Here, $\Wb_1$ and $\Wb_2$ are related to the point clouds $\mathcal{X}$ and $\mathcal{Y}$ respectively. 
Denote the associated \emph{degree matrices} $\Db_1$ and $\Db_2,$ which are diagonal matrices such that 
\begin{equation}\label{eq_degreematrices}
\Db_1(i,i)=\sum_{j=1}^n \Wb_1(i,j)\ \ \mbox{ and }\ \ \Db_2(i,i)=\sum_{j=1}^n \Wb_2(i,j), \ i=1,2,\cdots,n\,. 
\end{equation}   
Moreover, denote the transition matrices $\Ab_1, \Ab_2$ as
\begin{equation*}
\Ab_1=\mathbf{D}_1^{-1} \Wb_1\ \ \mbox{ and }\ \  \Ab_2=\mathbf{D}_2^{-1} \Wb_2\,.
\end{equation*}
The NCCA and AD matrices are defined as 
\begin{equation}\label{eq_nccaadmatrixdefinition}
\mathbf{N}=\Ab_1 \Ab_2^\top\ \ \mbox{ and }\ \  \Ab=\Ab_1 \Ab_2
\end{equation}
respectively. Note that in the current paper, for simplicity, we focus our study on the Gaussian kernels. More general kernel functions will be our future topics. 
Usually, the top few eigenpairs of $\mathbf{N}$ and $\mathbf{A}$ are used as features of the extracted common information shared by two sensors. We shall emphasize that in general, while $\Ab_1$ and $\Ab_2$ are diagonalizable, $\mathbf{N}$ and $\mathbf{A}$ are not. But theoretically we can obtain the top few eigenpairs without a problem under the common manifold model \cite{TALMON2019848,MR4010764} since asymptotically $\mathbf{N}$ and $\mathbf{A}$ both converge to self-adjoint operators. To avoid this trouble, researchers also consider singular value decomposition (SVD) of $\mathbf{N}$ and $\mathbf{A}$.
Another important fact is that usually we are interested in the case when $\mathcal{X}$ and $\mathcal{Y}$ are {\em aligned}; that is, $\mathbf{x}_i$ and $\mathbf{y}_i$ are sampled from the same system at the same time. However, the algorithm can be applied to any two datasets of the same size, while it is not our concern in this paper.

\subsection{Some related works}

In this subsection, we summarize some related results. Since the NCCA and AD matrices (\ref{eq_nccaadmatrixdefinition}) are essentially products of transition matrices, we start from summarizing the results of the affinity and transition matrices when there is only one sensor. On one hand, in the noiseless setting, the spectral properties have been widely studied, for example, \cite{BELKIN20081289, 10.1007/11503415_32,MR2332434, SINGER2006128, MR4130541,dunson2019diffusion}, to name but a few.  In summary, under the manifold model, researchers show that the Graph Laplacian(GL) converges to the Laplace–Beltrami operator in various settings with properly chosen bandwidth. On other hand, the spectral properties have been investigated in \cite{CS,MR3044473,DV,NEKW2,NEKkernel,MR3916104,KR} under the null setup. 
These works essentially show that when $\mathcal{X}$ contains pure high-dimensional noise, the affinity and transition matrices are governed by a low-rank perturbed Gram matrix when the bandwidth $h_1=p_1$.
Despite rich literature above about two extreme setups, limited results are available in the intermittent, or nonnull, setup \cite{informationplusenoise,DW2,NEKW}. 
For example, when the signal-to-noise ratio (SNR), which will be defined precisely later, is sufficiently large, the spectral properties of GL constructed from the noisy observation are close to that constructed from the clean signal. Moreover, the bandwidth plays an important role in the nonnull setup. For a more comprehensive review and sophisticated study on the spectral properties of the affinity and transition matrices for an individual point cloud, we refer the readers to \cite[Sections 1.2 and 1.3]{DW2}.    

For the NCCA and AD matrices, on one hand, in the noiseless setting, there have been several results under the common manifold model \cite{lederman2018learning,talmon2019latent}. On the other hand, 
 under the null setup that both sensors only capture high dimensional white noise, its spectral property has been studied recently \cite{DW1}. Specifically, except for a few larger outliers, when $h_1=p_1$ and $h_2=p_2,$ the edge eigenvalues of $n^2 \Ab$ or $n^2 \Nb$ converge to some deterministic limit depending on the free convolution (c.f. Definition \ref{defn_freeandsubor}) of two Marchenko-Pastur (MP) laws \cite{MP}. However, in the nonnull setting when both sensors are contaminated by noise, to our knowledge, there does not exist any theoretical study, particularly under the high dimensional setup.

\subsection{An overview of our results}

We now provide an overview of our results. The main contribution of this paper is a comprehensive study of NCCA and AD under the non-null case in the high dimensional setup. This result can be viewed as a continuation of the study under the null case \cite{DW1}. We focus on the setup that the signal is modeled by a low dimensional manifold. It turns out that this problem can be recast as studying the algorithm under the commonly applied spiked model, which will be made clear later. In addition to providing a theoretical justification based on the kernel random matrix theory, we propose a  method to choose the bandwidth adaptively.
Moreover, peculiar and counterintuitive results will be presented when two sensors have different behavior, which emphasizes the importance of carefully applying these algorithms in practice. 
In Section \ref{sec_mainresults}, we investigate the eigenvalues of the NCCA and AD matrices when $h_1=p_1$ and $h_2=p_2$, which is a common choice in the literature. The behavior of the eigenvalues varies according to both SNRs of the point clouds. 
When both SNRs are small, the spectral behavior of 
$n^2 \Nb$ and $n^2 \Ab$ is like that in the null case, while  
both the number of outliers and the convergence rates rely on SNRs; see Theorem \ref{thm_senarioiionesubcritical} for details. 
Furthermore, if one of the sensors has large SNR and the other one has small SNR, 
the eigenvalues of $n\Nb$ and $n\Ab$ provide limited information about the signal; 
see Theorem \ref{thm_secenario2slowly} for details. We emphasize that this result warns us that if we directly apply NCCA and AD without any sanity check, it may result in a misleading conclusion. 
When both SNRs are larger, 
the eigenvalues are close to the clean NCCA and AD matrices; 
see Theorem \ref{thm_secenario2moderately} for more details. It is clear that the classic bandwidth choices $h_k=p_k$ for $k=1,2$ are inappropriate when the SNR is large, 
since the bandwidth is too small compared with the signal strength. In this case $\Nb \approx \Ib, \Ab \approx \Ib$, and we obtain limited information about the signal; see (\ref{eqeqeqeqqqqrqrqer}) for details. 
To handle this issue, in Section  \ref{sec_adpativebandwidth}, we consider bandwidths that are adaptively chosen according to the dataset. 
With this choice, when the SNRs are large, 
NCCA and AD become non-trivial and informative; that is, NCCA and AD are robust against the high dimensional noise. 
See Theorem \ref{thm_mainbandwidthtwo} for details. 
%

\vspace{6pt}

\noindent{\bf Conventions.} The fundamental large parameter is $n$ and we always assume that $p_1$ and $p_2$ are comparable to and depend on $n$. We use $C$ to denote a generic positive constant, and the value may change from one line to the next. Similarly, we use $\epsilon$, $\tau$, $\delta$, etc., to denote generic small positive constants. If a constant depends on a quantity $a$, we use $C(a)$ or $C_a$ to indicate this dependence. For two quantities $a_n$ and $b_n$ depending on $n$, the notation $a_n = \OO(b_n)$ means that $|a_n| \le C|b_n|$ for some constant $C>0$, and $a_n=\mathrm{o}(b_n)$ means that $|a_n| \le c_n |b_n|$ for some positive sequence $c_n\downarrow 0$ as $n\to \infty$. We also use the notations $a_n \lesssim b_n$ if $a_n = \OO(b_n)$, and $a_n \asymp b_n$ if $a_n = \OO(b_n)$ and $b_n = \OO(a_n)$.
For a $n\times n$ matrix $A$,  $\|A\|$ indicates the operator norm of $A$, and $A=O(a_n)$ means $\|A\|\leq Ca_n$ for some constant $C>0$.
Finally, for a random vector $\ub,$ we say it is sub-Gaussian if for any deterministic vector $\mathbf{a}$, we have $\mathbb{E}(\exp(\mathbf{a}^\top \mathbf{u})) \leq \exp(\| \mathbf{a}\|_2^2/2)$. 

The paper is organized as follows. In Section \ref{sec_mathrmtbackground}, we introduce the mathematical setup and some background in random matrix theory. In Section \ref{sec_mainresults}, we state our main results for the classic choice of bandwidth. In Section \ref{sec_adpativebandwidth}, we state the main results for the adaptively chosen bandwidth.  In Section \ref{sec_proofs}, we offer the technical proofs of the main results. In Appendix \ref{sec_auxilemma}, we provide and prove some preliminary results which will be used in the technical proofs.

\section{Mathematical framework and background}\label{sec_mathrmtbackground}

\subsection{Mathematical framework}

We focus on the following model for the datasets $\mathcal{X}$ and $\mathcal{Y}$. Assume that the first sensor i.i.d. sample $n$ clean signals from a sub-Gaussian vector $X:(\Omega, \mathcal{F},\mathbb{P})\to \mathbb{R}^{p_1}$, denoted as $\{\mathbf{u}_{ix} \}_{i=1}^n \in \mathbb{R}^{p_1}$, where $(\Omega, \mathcal{F},\mathbb{P})$ is a probability space. Similarly, assume that the second sensor also i.i.d. sample clean signals from a sub-Gaussian vector $Y$, denoted as $\{\mathbf{u}_{iy}\}_{i=1}^n \in \mathbb{R}^{p_2}$. 
Since we focus on the distance of pairwise samples, without loss of generality, we assume that 
\begin{equation}\label{eq_signalmeanzeroassumption}
\mathbb{E}\ub_{ix}=\mathbf{0}\quad\mbox{and}\quad\mathbb{E}\ub_{iy}=\mathbf{0}.
\end{equation}
Denote $S_1=\operatorname{Cov}(\ub_{ix})$ and $S_2=\operatorname{Cov}(\ub_{iy})$, and to simplify the discussion, we assume that $S_1$ and $S_2$  admit the following spectral decomposition 
\begin{equation}\label{eq_spectraldecompositions1s2}
S_k=\operatorname{diag}\{\sigma_{k1}^2, \cdots, \sigma_{kd}^2, 0, \cdots, 0\}\in \mathbb{R}^{p_k\times p_k}, \ k=1,2,  
\end{equation}
where $d_1$ and $d_2$ are fixed integers. We model the {\em common information} by assuming that there exists a bijection $\phi$ so that 
\begin{equation}\label{definition phi between X and Y}
X(\Omega)=\phi(Y(\Omega))\,;
\end{equation}  
that is, we have $\ub_{ix}=\phi(\ub_{iy})$ for any $i=1,\ldots,n$.
In practice, the clean signals $\{\mathbf{u}_{ix} \} $ and $\{\mathbf{u}_{iy}\}$ are contaminated by two sequences of i.i.d. sub-Gaussian noise $\{\zb_i\} \in \mathbb{R}^{p_1}$ and $\{\wb_i\} \in \mathbb{R}^{p_2}$, respectively, so that the data generating process follows 
\begin{equation}\label{eq_basicmodel}
\mathbf{x}_i= \mathbf{u}_{ix}+\mathbf{z}_i \ \ \mbox{and}\ \ \mathbf{y}_i= \mathbf{u}_{iy}+\mathbf{w}_i\,,
\end{equation}  
where
\begin{equation}\label{eq_s2noisemodel11}
\mathbb{E} \mathbf{z}_i=\mathbf{0}_{p_1}, \ \operatorname{Cov}(\zb_i)=\mathbf{I}_{p_1},  \  \mathbb{E} \mathbf{w}_i=\mathbf{0}_{p_2}, \ \operatorname{Cov}(\wb_i)=\mathbf{I}_{p_2}.
\end{equation}
We further assume that $\mathbf{z}_i$ and $\mathbf{w}_i$ are independent with each other and also independent of  $\{\ub_{ix}\}$ and $\{\ub_{iy}\}$. 
We are mainly interested in the {\em high dimensional} setting; that is, $p_1$ and $p_2$ are comparably as large as $n.$ More specifically, we assume that there exists some small constant $0<\gamma<1$ such that
\begin{equation}\label{assumptuon_ratio}
\gamma \leq c_1:=\frac{n}{p_1} \leq \gamma^{-1}\ \ \mbox{and}\ \ \gamma\leq c_2:=\frac{n}{p_2} \leq \gamma^{-1}. \ 
\end{equation}
The SNRs in our setting are defined as $\{\sigma_{1i}^2\}$ and $\{\sigma_{2i}^2\},$ respectively, so that for all $1 \leq i \leq d_1$ and $1 \leq j\leq d_2$,
\begin{equation}\label{eq_variancesnrration1}
\sigma_{1i}^2 \asymp n^{\zeta_{1i}}, \ \sigma_{2j}^2 \asymp n^{\zeta_{2j}}\,,
\end{equation}
for some constants $\zeta_{1i},\zeta_{2j} \geq 0$.  To avoid repetitions, we summarize the assumptions as follows.
\begin{assu}\label{assu_main} Throughout the paper, we assume that (\ref{eq_signalmeanzeroassumption})--(\ref{eq_variancesnrration1}) hold.
\end{assu}

In view of (\ref{eq_spectraldecompositions1s2}), the model (\ref{eq_basicmodel}) for each sensor is related to the spiked covariance matrix models \cite{johnstone2001}. 
%
%
%
We comment that this seemingly simple model, particularly \eqref{eq_spectraldecompositions1s2}, includes the commonly considered nonlinear {\em common manifold model}. 
In the literature, the common manifold model means that two sensors sample simultaneously from {\em one} low dimensional manifold; that is, $\mathbf{u}_{ix} =\mathbf{u}_{iy} \in M$ and $\phi$ is an identity map, where $M$ is a low dimensional smooth and compact manifold embedded in the high dimensional Euclidean space.  Since we are interested in the kernel matrices depending on pairwise distances, which is invariant to rotation, when combined with Nash's embedding theory, the common manifold can be assumed to be supported in the first few axes of the high dimensional space, like that in \eqref{eq_spectraldecompositions1s2}. As a result, the common manifold model becomes a special case of the model \eqref{eq_basicmodel}. We refer readers to \cite{DW2} for a detailed discussion of this relationship. A special example of the common manifold model is the widely considered linear subspace as the common component; that is, when $M=\mathbb{R}^{d}$ embedded in $\mathbb{R}^{p_k}$ for $k=1,2$. In this case, we could simply apply CCA to estimate the common component, and its behavior in the high dimensional setup has been studied in \cite{BHPZ,2021arXiv210203297M}. 

We should emphasize that through the analysis of NCCA and AD under the common component model satisfying Assumption \eqref{assu_main}, we do not claim that we could understand the underlying manifold structure. The problem we are asking here is the nontrivial relationship between the noisy and clean affinity and transition matrices, while the problem of exploring the manifold structure from the {\em clean} datasets \cite{MR4130541,dunson2019diffusion} is a different one, which is usually understood as the {\em manifold learning} problem. To study the nontrivial relationship between the noisy and clean affinity and transition matrices, it is the spiked covariance structure that we focus on, but not the possibly non-trivial $\phi$.
By establishing the nontrivial relationship between the noisy and clean affinity and transition matrices in this paper, when combined with the knowledge of manifold learning via the kernel-based manifold learning algorithm with clean datasets \cite{LEDERMAN2018509,talmon2019latent,shnitzer2019recovering}, we know how to explore the common manifold structure, which depends on $\phi$, from the {\em noisy} datasets.

\begin{rem}
While it is not our focus in this paper, we should mention that our model includes the case that the datasets captured by two sensors are not exactly on {\em one} manifold $M$, but from {\em two} manifolds that are diffeomorphic to $M$ \cite{shnitzer2019recovering}. Specifically, the first sensor samples points $\{\mathbf{u}_{ix} \}$ from $\phi_1(M)$, while the second sensor simultaneously samples points $\{\mathbf{u}_{iy} \}$ from $\phi_2(M)$, where $\phi_1$ and $\phi_2$ are both diffeomorphisms and $\mathbf{u}_{iy}=\phi_1(\phi_2^{-1}(\mathbf{u}_{ix}))$; that is, $\phi=\phi_1\circ \phi_2^{-1}$ in \eqref{definition phi between X and Y}. Note that in this case, $d_1$ might be different from $d_2$.
Moreover, the samples from two sensors can be more general. For example, in \cite{talmon2019latent}, the principle bundle structure is considered to model the ``nuisance'', which can be understood as the ``deterministic noise'', and in \cite{LEDERMAN2018509} the metric space as the common component is considered. 
While it is possible to consider a more complicated one, since we are interested in studying how noise impacts NCCA and AD, in this paper we simply focus on the above model but not further elaborate this possible extension. 
\end{rem}

\subsection{Some random matrix theory background}

In this subsection, we introduce some random matrix theory background and necessary notations. Let $\Zb \in \mathbb{R}^{p_1 \times n}$ be the data matrix associated with $\{\zb_i\}$; that is, the $i$-th column $\Zb$ is $\zb_i$, and consider the scaled noise $s\Zb$, where $s>0$ stands for the standard deviation of the scaled noise. Denote the empirical spectral distribution (ESD) of $\Qb=\frac{s^2}{p_1}\Zb^\top \Zb$ as 
\begin{equation*}
\mu_{\Qb}(x)=\frac{1}{n} \sum_{i=1}^n \mathbf{1}_{\{\lambda_i(\Qb)\leq x\}}, \ x \in \mathbb{R}. 
\end{equation*}
It is well-known that in the high dimensional regime (\ref{assumptuon_ratio}), $\mu_{\Qb}$ has the same asymptotic \cite{MR3704770} as the so-called MP law \cite{MP}, denoted as $\nu_{c_1,s^2}$, satisfying
\begin{equation}\label{eq_mp}
\nu_{c_1,s^2}(I)=
(1-c_1)_{+} \chi_{I}(0)+\zeta_{c_1,s^2}(I)\,,
\end{equation}
where $I \subset \mathbb{R}$ is a measurable set, $\chi_I$ is the indicator function and $(a)_+:=0$ when $a\leq 0$ and $(a)_+:=a$ when $a>0$, 
\begin{equation}\label{eq_densitymplaw}
d \zeta_{c_1,s^2}(x)=\frac{1}{2\pi s^2} \frac{\sqrt{(\lambda_{+,1}-x)(x-\lambda_{-,1})}}{c_nx} \mathrm{d}x\,, 
\end{equation}
$\lambda_{+,1}=(1+ s^2 \sqrt{c_1})^2$ and $\lambda_{-,1}=(1- s^2 \sqrt{c_1})^2$. 
Denote
\begin{equation}\label{eq_defntauk}
\tau_1 \equiv \tau_1(\lambda_1):=2 \left(\frac{\lambda_1}{p_1}+1 \right), \  \tau_2 \equiv \tau_2(\lambda_2):=2 \left(\frac{\lambda_2}{p_2}+1 \right),
\end{equation}
and for $k=1,2$,
\begin{equation}\label{eq_defnvarsigmak}
\varsigma_k \equiv \varsigma_k(\tau_k):=1-2 \upsilon \exp(-\upsilon \tau_k)-\exp(-\upsilon \tau_k)\,.  
\end{equation}
For any constant $\mathsf{a}>0,$ denote $\mathrm{T}_{\mathsf{a}}$ be the shifting operator that shifts a probability measure $\nu$ defined on $\mathbb{R}$ by $\mathsf{a};$ that is 
\begin{equation}\label{eq_defnshiftoperator}
\mathrm{T}_{\mathsf{a}} \nu(I)=\nu(I-\mathsf{a}),
\end{equation}
where $I-\mathsf{a}$ means the shifted set. Using the notation (\ref{eq_mp}), for $k=1,2,$ denote 
\begin{equation}\label{eq_nu1nu2}
\nu_k:= \mathrm{T}_{\varsigma_k} \nu_{c_k, \eta}, \ \mbox{where }  \ \eta=2 \upsilon \exp(-2 \upsilon). 
\end{equation} 
Next, we introduce a $n$-dependent quantity of some probability measure.  For a given probability measure $\mu$ and $n \in \mathbb{N},$ define $\gamma_\mu(j)$ as 
\begin{equation}\label{eq_defnclassiciallocaltion}
\int_{\gamma_\mu(j)}^\infty \mu(\dd x)=\frac{j}{n}. 
\end{equation}
Finally, we recall the following notion of \emph{stochastic domination} \cite[Chapter 6.3]{erdos2017dynamical} that we will frequently use. 
Let
$ \mathsf{X}=\big\{\mathsf{X}^{(n)}(u):  n \in \mathbb{N}, \ u \in \mathsf{U}^{(n)}\big\}$ and $\mathsf{Y}=\big\{\mathsf{Y}^{(n)}(u):  n \in \mathbb{N}, \ u \in \mathsf{U}^{(n)}\big\}$
be two families of nonnegative random variables, where $\mathsf{U}^{(n)}$ is a possibly $n$-dependent parameter set. We say that $\mathsf{X}$ is {\em stochastically dominated} by $\mathsf{Y}$, uniformly in the parameter $u$, if for any small $\upsilon>0$ and large $ D>0$, there exists $n_0(\upsilon, D)\in \mathbb{N}$ so that we have $\sup_{u \in \mathsf{U}^{(n)}} \mathbb{P} \Big( \mathsf{X}^{(n)}(u)>n^{\upsilon}\mathsf{Y}^{(n)}(u) \Big) \leq n^{- D}$, for a sufficiently large $n \geq  n_0(\upsilon, D)$. We interchangeably use the notation $\mathsf{X}=\OO_{\prec}(\mathsf{Y})$ {or $\mathsf{X} \prec \mathsf{Y}$}  if $\mathsf{X}$ is stochastically dominated by $\mathsf{Y}$, uniformly in $u$, when there is no danger of confusion.
In addition, we say that an $n$-dependent event $\Omega \equiv \Omega(n)$ holds {\em with high probability} if for a $D>1$, there exists $n_0=n_0(D)>0$ so that
$\mathbb{P}(\Omega) \geq 1-n^{-D}$, when $n \geq n_0.$

\subsection{A brief summary of free multiplication of random matrices}\label{sec_appendixfpt}


In this subsection, we summarize some preliminary results about free multiplication of random matrices from \cite{DJ, Jipaper}. Given some probability measure $\mu,$ its Stieltjes transform and $M$-transform are defined as 
\begin{equation*}
m_{\mu}(z)=\int \frac{1}{x-z} \dd \mu(x) \ \ \mbox{and}\ \ M_{\mu(z)}=\frac{zm_{\mu}(z)}{1+zm_{\mu}(z)}\,,
\end{equation*}
where $z \in \mathbb{C} \backslash \mathbb{R}_+$, respectively.
We next introduce the subordination functions  utilizing the $M$-transform \cite{Jipaper,DANORIGINALPAPER}. 
For any two  probability measures $\mu_x$ and $\mu_y$, there exist analytic functions  $\Omega_x(z)$ and $\Omega_y(z)$ satisfying 
\begin{equation}\label{eq_subordinationequation}
zM_{\mu_x}(\Omega_y(z))=z M_{\mu_y}(\Omega_x(z))=\Omega_x(z) \Omega_y(z) , \ \text{for} \ z \in \mathbb{C} \backslash \mathbb{R}_+. 
\end{equation}
Armed with the subordination functions, we now introduce the free multiplicative convolution of $\mu_x$ and $\mu_y,$ denoted as $\mu_x \boxtimes \mu_y$, when $\mu_x$ and $\mu_y$ are compactly supported on $\mathbb{R}_+$ but not both delta measures supported on $\{0\}$; see Definition 2.7 of \cite{DJ}. 
\begin{defn}\label{defn_freeandsubor} Denote the analytic function $M$ by 
\begin{equation}\label{eq_fmcdefinition}
M(z):=M_{\mu_x}(\Omega_y(z))= M_{\mu_y}(\Omega_x(z)).
\end{equation}
Then the free multiplicative convolution $\mu_x \boxtimes \mu_y$ is defined as the unique probability measure that (\ref{eq_fmcdefinition}) holds for all $z \in \mathbb{C} \backslash \mathbb{R}_+,$ i.e., $M(z) \equiv M_{\mu_x \boxtimes \mu_y}(z)$ is the $M$-transform of $\mu_x \boxtimes \mu_y.$  Moreover, $\Omega_x$ and $\Omega_y$ are referred to as the subordination functions.  
\end{defn}

For $\nu_1$ and $\nu_2$ defined in (\ref{eq_nu1nu2}), we have two sequences $a_k=\gamma_{\nu_1}(k-1)$ and $b_k=\gamma_{\nu_2}(k-1)$, where $1 \leq k \leq n$. Note that we have 
\begin{equation}\label{eq_rigiditydefinition}
\int_{a_k}^{E_{+,1}} \dd \nu_1(x)=\frac{k-1}{n},  \ \int_{b_k}^{E_{+,2}} \dd \nu_2(x)=\frac{k-1}{n}, 
\end{equation}
where $E_{+,1}, E_{+,2}$ are the right edges of $\nu_1$ and $\nu_2,$ respectively. Denote two $n \times n$ positive definite matrices $\Sigma_1$ and $\Sigma_2$ as follows 
\begin{equation}\label{eq_definsigma1sigma2}
\Sigma_1=\operatorname{diag}\{a_1, a_2, \cdots, a_n\}, \ \Sigma_2=\operatorname{diag}\{b_1, b_2, \cdots, b_n\}.
\end{equation}
Let $\Ub$ be an $n \times n$ Haar distributed random matrix in $O(n)$ and denote 
\begin{equation*}
\Hb=\Sigma_2 \Ub \Sigma_1 \Ub^\top. 
\end{equation*} 
The following lemma summarizes the rigidity of eigenvalues of $\Hb.$ 
\begin{lem}\label{lem_multiplication}
Suppose (\ref{eq_definsigma1sigma2}) holds. Then we have that
\begin{equation*}
\sup_j \left| \lambda_j(\Hb)-\gamma_{\nu_{1} \boxtimes \nu_2}(j) \right| \prec n^{-2/3} \widetilde{j}^{-1/3}, \ \widetilde{j}:=\min\left\{p_1 \wedge n+1-j,j\right\}.
\end{equation*} 
\end{lem}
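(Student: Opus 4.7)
My plan is to reduce the statement to the general eigenvalue rigidity theorem for free multiplicative convolution of random matrices of the form $\Sigma_2 \mathbf{U} \Sigma_1 \mathbf{U}^\top$, $\mathbf{U}$ Haar on $O(n)$, which is established in \cite{DJ,Jipaper}. That theorem asserts that, provided (i) the empirical spectral distributions of the deterministic PSD matrices $\Sigma_1,\Sigma_2$ satisfy an $n^{-2/3}\widetilde{j}^{-1/3}$ rigidity with respect to two target measures $\mu_1,\mu_2$, and (ii) $\mu_1,\mu_2$ have square-root edges and non-degenerate supports, then the eigenvalues of $\Sigma_2 \mathbf{U} \Sigma_1 \mathbf{U}^\top$ obey the same optimal rigidity bound with respect to $\mu_1\boxtimes\mu_2$. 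The task is therefore to check the two hypotheses for $\mu_1=\nu_1$, $\mu_2=\nu_2$, and the particular choice of $\Sigma_1,\Sigma_2$ in \eqref{eq_definsigma1sigma2}.

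First, I would verify that $\Sigma_1$ and $\Sigma_2$ are rigid with respect to $\nu_1$ and $\nu_2$, \emph{deterministically}. By construction, $a_k=\gamma_{\nu_1}(k-1)$ and $b_k=\gamma_{\nu_2}(k-1)$, so the ESDs of $\Sigma_1$ and $\Sigma_2$ are the $n$-point quantile discretizations of $\nu_1$ and $\nu_2$. Since $\nu_1,\nu_2$ are shifted Marchenko--Pastur laws (see \eqref{eq_nu1nu2}), their densities have the square-root profile \eqref{eq_densitymplaw} near each bulk edge and are bounded away from zero in the bulk interior. A routine computation with the definition \eqref{eq_defnclassiciallocaltion} then gives $|a_k-\gamma_{\nu_1}(k)|\lesssim n^{-2/3}\widetilde{k}^{-1/3}$ and analogously for $b_k$, which is the required rigidity with room to spare.

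Second, I would check the regularity conditions on $\nu_1,\nu_2$ needed in \cite{DJ,Jipaper}. Each $\nu_k$ is compactly supported, with square-root vanishing at both endpoints of its bulk component; when $c_k>1$ there is in addition a Dirac mass at the shift $\varsigma_k$, but this atom is separated from the bulk and corresponds to the trivially zero eigenvalues of the underlying MP ensemble, so it can be handled by the same index-shift argument as in \cite{DJ}. The analysis of the subordination equations \eqref{eq_subordinationequation}--\eqref{eq_fmcdefinition} performed in \cite{DJ} then yields that $\nu_1\boxtimes\nu_2$ itself has square-root edges and that its classical locations $\gamma_{\nu_1\boxtimes\nu_2}(j)$ satisfy the regularity required to translate the local law into eigenvalue rigidity.

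With hypotheses (i) and (ii) in place, a direct application of the rigidity theorem from \cite{DJ,Jipaper} gives
\begin{equation*}
\sup_j \bigl|\lambda_j(\mathbf{H})-\gamma_{\nu_1\boxtimes\nu_2}(j)\bigr|\prec n^{-2/3}\widetilde{j}^{-1/3},
\end{equation*}
which is the claim. The main obstacle I anticipate is purely bookkeeping: carefully matching the normalization conventions of \cite{DJ,Jipaper} (e.g., orientation of $\Sigma_1$ versus $\Sigma_2$, treatment of the possible atom at $\varsigma_k$, and the edge-versus-bulk split hidden in the factor $\widetilde{j}^{-1/3}$) to the present setup; the analytic core—square-root edge behavior of the subordination pair—is already contained in those references.
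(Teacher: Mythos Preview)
Your proposal is correct and follows essentially the same approach as the paper: the paper's proof simply observes that Assumptions 2.2, 2.4 and 2.7 of \cite{DJ} are satisfied (in particular the square-root edge behavior of the shifted MP laws from \eqref{eq_densitymplaw}) and then invokes Theorems 2.14 and 2.20 of \cite{DJ}. Your write-up spells out the verification of these hypotheses in more detail, but the argument is the same direct citation of the rigidity theorem in \cite{DJ}.
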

\begin{proof}
The proof follows from Theorems 2.14 and 2.20 of \cite{DJ} since the Assumptions 2.2, 2.4 and 2.7 in \cite{DJ} are satisfied. Especially, (iii) of Assumption 2.2 in \cite{DJ} holds due to the square root behavior of the MP laws as indicated by (\ref{eq_densitymplaw}). 
\end{proof}

\section{Main results (I)--classic bandwidth: $h_1 \asymp p_1, h_2 \asymp p_2$ }\label{sec_mainresults}

In this section, we state our main results regarding the  eigenvalues of $\mathbf{N}$ and $\mathbf{A}$ when $h_k \asymp p_k$, where $k=1,2.$ For definiteness, we assume that  
 $h_1=p_1$ and $h_2=p_2.$ In what follows, for the ease of statements, we focus on reporting the results for $d=1$ and hence omit the subscripts of the indices $i,j$ in (\ref{eq_variancesnrration1}). For the general setting with $d_1>1$ or $d_2>1$, we refer the readers to Remark \ref{rmk_generalspike} below for more details. Finally, we focus on reporting the results for the NCCA matrix $\Nb$. The results for the AD matrix $\Ab$ are similar. For the details of the AD matrix $\Ab$, we refer the readers to Remark  \ref{rmk_admatrix} below.
Moreover, by symmetry, without loss of generality, we always assume that $\zeta_1 \geq \zeta_2;$ that is, the first sensor always has a larger SNR.

\subsection{Noninformative region: $0 \leq \zeta_2<1$}\label{sec_s2}
In this subsection, we state the results when at least one sensor contains strong noise, or equivalently has a small SNR; that is, $0 \leq \zeta_2<1$. In this case, the NCCA and AD will not be able to provide useful information or can only provide limited information for the underlying common manifold.

\subsubsection{When both sensors have small SNRs, $0 \leq \zeta_2 \leq \zeta_1 < 1$}
In this case, both sensors have small SNRs such that the noise dominates the signal. For some fixed integers $\mathsf{s}_1, \mathsf{s}_2$ satisfying
\begin{equation}\label{eq_defns1s2}
4 \leq \mathsf{s}_k \leq C 4^{\mathfrak{d}_k}, \ \mathfrak{d}_k:= \left \lceil  \frac{1}{1-{\zeta_k}} \right \rceil+1, \ k=1,2,  
\end{equation}
where $C>0$ is some constant, denote $\mathsf{T}$ as
\begin{equation}\label{eq_defnmathsfT}
\mathsf{T}:=
\begin{cases}
8, & \ 0 \leq \zeta_2 \leq \zeta_1<0.5, \\
\mathsf{s}_1+8, & \ 0 \leq \zeta_2<0.5 \leq \zeta_1 <1, \\
\mathsf{s}_1+\mathsf{s}_2+8, & \ 0.5 \leq \zeta_2 \leq \zeta_1<1.  
\end{cases}
\end{equation}
Moreover, define $\mathsf{e}_k<0, k=1,2,$ as 
\begin{equation}\label{Definition: zeta in Scenario II}
\mathsf{e}_k:=(\zeta_k-1)\left(\left\lceil \frac{1}{1-\zeta_k}\right\rceil+1 \right)+1.
\end{equation}

\begin{thm}\label{thm_senarioiionesubcritical}
Suppose Assumption \ref{assu_main} holds with $0 \leq \zeta_2 \leq \zeta_1<1$, $h_1=p_1$, $h_2=p_2$ and $d_1=d_2=1.$ Moreover, we assume that
\begin{equation}\label{eq_s2noisemodel}
\mathbf{z}_i \overset{\operatorname{i.i.d.}}{\sim} \mathcal{N}(0, \mathbf{I}_{p_1}),  \  \mathbf{w}_i \overset{\operatorname{i.i.d.}}{\sim} \mathcal{N}(0, \mathbf{I}_{p_2})\, ,
\end{equation}
and there exists some constant $\tau>0$ such that 
\begin{equation}\label{eq_ratioone}
|c_k-1| \geq \tau, \ k=1,2. 
\end{equation}
Then, when $n$ is sufficiently large, we have that for $i>\mathsf{T}$ in (\ref{eq_defnmathsfT}),
\begin{equation}\label{eq_finalresultthm31}
\left| \lambda_i(n^2 \Nb)-\exp(4 \upsilon) \gamma_{\nu_1 \boxtimes \nu_2}(j) \right|=\OO_{\prec}\left( \max \left\{ n^{\frac{\zeta_1-1}{2}}, n^{\mathsf{e}_1}\right\}  \right), 
\end{equation} 
where $\mathsf{e}_1$ is defined in (\ref{Definition: zeta in Scenario II}).  
\end{thm}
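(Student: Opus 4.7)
The plan is to reduce the eigenvalue problem for $n^2\Nb$ to the product-of-Gram-matrices setting covered by Lemma \ref{lem_multiplication}, via a Taylor expansion of the Gaussian kernel that separates a rank-one ``constant'' piece, a spike piece of controlled rank, and a bulk piece whose spectrum matches $\nu_1\boxtimes\nu_2$. First I would concentrate the pairwise squared distances. For $i\neq j$, one writes $\|\xb_i-\xb_j\|_2^2/p_1=\tau_1+R_{1,ij}$ with $\tau_1$ from (\ref{eq_defntauk}), where the fluctuation $R_{1,ij}$ comes from three sources: the linear noise inner product $\zb_i^\top\zb_j/p_1$ (which drives the MP bulk), the centered norm squared $(\|\zb_i\|^2+\|\zb_j\|^2)/p_1-2$, and the signal-containing terms of size $\OO_\prec(n^{(\zeta_1-1)/2})$. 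Expanding $\exp(-\upsilon x)$ around $x=\tau_1$ on the off-diagonal and accounting for the diagonal correction yields
\begin{equation*}
\Wb_1=e^{-\upsilon\tau_1}\bigl(\mathbf{1}\mathbf{1}^\top+\Kb_1\bigr)+(\text{diag. correction})+\mathcal{E}_1,
\end{equation*}
with an analogous expression for $\Wb_2$. The heart of the reduction is to show that after subtracting the rank-one mean and the diagonal, $\Kb_k$ is unitarily equivalent (up to negligible error) to the MP-type matrix whose eigenvalue quantiles are exactly $\gamma_{\nu_k}(\cdot)$ from (\ref{eq_nu1nu2})--(\ref{eq_definsigma1sigma2}); this is the same Péché/El Karoui-type expansion used in \cite{DW1} for the null case, adapted to track the signal.

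Next I would normalize by the degree matrix. Concentration gives $\Db_k=ne^{-\upsilon\tau_k}\Ib+\mathcal{D}_k$ with $\mathcal{D}_k$ of order $\sqrt{n}$ plus signal contributions, so
\begin{equation*}
n\Ab_k=\mathbf{1}\mathbf{1}^\top+\Kb_k+\mathcal{E}_k',
\end{equation*}
and therefore $n^2\Nb=(n\Ab_1)(n\Ab_2)^\top$ splits into the top rank-one block (contributing one outlier at scale $n$), cross terms of the form $\mathbf{1}\mathbf{1}^\top \Kb_2^\top$ and $\Kb_1\mathbf{1}\mathbf{1}^\top$ (which are small in operator norm on the complement of $\mathbf{1}$), and the genuine product $\Kb_1\Kb_2^\top$. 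Because the noises in the two sensors are independent, after projecting out $\mathbf{1}$ and using Gaussianity (\ref{eq_s2noisemodel}) the second factor can be conjugated by an independent Haar matrix $\Ub\in O(n)$, reducing $\Kb_1\Kb_2^\top$ to the form $\Sigma_2\Ub\Sigma_1\Ub^\top$ of Lemma \ref{lem_multiplication}; this is where the prefactor $\exp(4\upsilon)$ is picked up from the $e^{-\upsilon\tau_k}\to e^{-2\upsilon}$ normalizations in both factors. Applying Lemma \ref{lem_multiplication} then yields the deterministic limit $\gamma_{\nu_1\boxtimes\nu_2}(j)$ with rigidity $n^{-2/3}\widetilde j^{-1/3}$ for the eigenvalues of the bulk part.

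The main obstacle is the bookkeeping of the signal-induced finite-rank perturbations along the entire Taylor expansion, and this is what forces the definitions (\ref{eq_defns1s2})--(\ref{Definition: zeta in Scenario II}). Each order $\ell$ in the expansion of $\exp(-\upsilon R_{1,ij})$ produces a rank-$\OO(1)$ signal contribution of typical size $n^{\ell(\zeta_1-1)}$ to the off-diagonal of $\Wb_1$ (with the analogous statement for sensor 2). To force the aggregate signal error below the sought rate one must keep expanding to order $\mathfrak{d}_k=\lceil 1/(1-\zeta_k)\rceil+1$: truncating there leaves a deterministic remainder of exactly $n^{\mathsf{e}_k}$ as in (\ref{Definition: zeta in Scenario II}), while the retained terms produce at most $\mathsf{s}_k\le C 4^{\mathfrak{d}_k}$ outlier eigenvalues (an outlier per distinct product of rank-one spikes across orders), combining to the total budget $\mathsf{T}$ of (\ref{eq_defnmathsfT}). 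Once the outliers are removed by Weyl's inequality, combining the $n^{(\zeta_1-1)/2}$ random fluctuation from the linear-in-signal term, the $n^{\mathsf{e}_1}$ deterministic truncation error, and the MP rigidity $n^{-2/3}\widetilde j^{-1/3}$ (which is dominated by the former two under (\ref{eq_ratioone}), the latter keeping $\gamma_{\nu_1\boxtimes\nu_2}(j)$ bounded away from the hard edges) produces exactly the bound (\ref{eq_finalresultthm31}). The delicate point is verifying that the cross terms between different Taylor orders do not inflate either the outlier count beyond $\mathsf{T}$ or the bulk error beyond $\max\{n^{(\zeta_1-1)/2},n^{\mathsf{e}_1}\}$; this requires a careful resolvent expansion around the null-case limit together with a Schur-complement argument to isolate the spiked directions.
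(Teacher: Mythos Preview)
Your outline is correct and tracks the paper's proof closely: Taylor-expand each $\Wb_k$ around $\tau_k$ (Lemma \ref{lem_conmatrix}), strip off the low-rank shift terms $\rSh_k$ (and, when $\zeta_k\ge 1/2$, the higher-order piece $\rSh_\sd$), concentrate the degree matrices so that $n^2\Nb$ reduces to $\exp(4\upsilon)\,\Tb_1\Tb_2$ plus a low-rank perturbation plus an $\OO_\prec(n^{(\zeta_1-1)/2}\vee n^{\mathsf{e}_1})$ error, then use Gaussianity to make $\Ub_2^\top\Ub_1$ Haar and apply Lemma \ref{lem_multiplication}.

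The one place you overcomplicate is the handling of the cross terms and spike contributions. The paper does \emph{not} argue that $\mathbf{1}\mathbf{1}^\top\Kb_2^\top$ and its siblings are ``small on the complement of $\mathbf{1}$'', nor does it invoke any resolvent expansion or Schur complement. It simply counts ranks: $\operatorname{rank}(\rSh_k)\le 3$, $\operatorname{rank}(\rSh_\sd)\le \mathsf{s}_1$, $\operatorname{rank}(\Delta_k)\le 1$, so the entire low-rank correction to $\exp(4\upsilon)\Tb_1\Tb_2$ has rank at most $\mathsf{T}$, and Weyl's inequality finishes the job for $i>\mathsf{T}$. The cross terms between different Taylor orders pose no extra difficulty because the rank of a sum is subadditive; no isolation of spiked directions is needed since the theorem makes no claim about the outliers themselves.
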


Intuitively, in this region we cannot obtain any information about the signal, since asymptotically the noise dominates the signal. In practice, the datasets might fall in this region when both sensors are corrupted or the environment noise is too strong. This intuition is confirmed by Theorem \ref{thm_senarioiionesubcritical}. 
As discussed in \cite{DW2,NEKkernel}, when the noise dominates the signal, the outlier eigenvalues are mainly from the kernel function expansion or the Gram matrix and hence are not useful to study the underlying manifold structure. The number of these outlier eigenvalues depend on the SNR as can be seen in (\ref{eq_defnmathsfT}), which can be figured out from the kernel function expansion.

We should point out that (\ref{eq_s2noisemodel}) and (\ref{eq_ratioone}) are mainly technical assumptions and commonly used in the random matrix theory literature.  They guarantee that the individual bulk eigenvalues of $\Nb$ can be characterized by the quantiles of free multiplicative convolution.  
Specifically, (\ref{eq_ratioone}) ensures that the Gram matrices are bounded from below and (\ref{eq_s2noisemodel}) has been used in \cite{DW1} to ensure that the eigenvectors of the Gram  matrix are Haar distributed. As is discussed in \cite{DW1}, while it is widely accepted that the eigenvectors of the Gram matrix from i.i.d. sub-Gaussian random vectors are Haar distributed, we cannot find a proper proof. Since the proof of this Theorem depends on the results in \cite{DW1}, we impose the same condition. The assumption (\ref{eq_s2noisemodel}) can be removed when we can show that the eigenvectors of the Gram matrix from i.i.d. sub-Gaussian random vectors are Haar distributed. 
Since this is not the focus of the current paper, we will pursue this direction in future works.   

Finally, we mention that Theorem \ref{thm_senarioiionesubcritical} holds for more general kernel function beyond the Gaussian kernel. For example, as discussed in \cite[Remark 2.4]{DW2}, we can choose a general kernel function which is decreasing, $C^3$ and $f(2)>0.$

\subsubsection{When one sensor has a small SNR, $0 \leq \zeta_2<1 \leq \zeta_1<\infty$}

In Theorem \ref{thm_secenario2slowly} below, we consider that $0 \leq \zeta_2<1 \leq \zeta_1<\infty,$ i.e., one of the sensors has a large SNR whereas the other is dominated by the noise.  We prepare some notations here. Let $\Wb_{1,s}$ and $\Wb_{2,s}$ be the affinity matrices associated with $\{\ub_{ix}\}$ and $\{\ub_{iy}\}$ respectively, where the subscript $s$ stands for the short-hand notation for the signal. In other words, $\Wb_{1,s}$ and $\Wb_{2,s}$ are constructed from the clean signal. In general, since $h_1$ may be different from $h_2$, $\Wb_{1,s}$ and $\Wb_{2,s}$ might be different. Denote
\begin{align}\label{eq_widetildewxss2}
\widetilde{\Wb}_{1,s} =\exp(-2  \upsilon) \Wb_{1,s}+(1-\exp(-2   \upsilon))\Ib. 
\end{align}
Analogously, we denote the associated degree matrix and transition matrix as $\widetilde{\Db}_{1,s}$ and $\widetilde{\Ab}_{1,s}$ respectively, that is,
\begin{equation}\label{eq_tildeaxs2}
\widetilde{\Ab}_{1,s}=\widetilde{\Db}_{1,s}^{-1} \widetilde{\Wb}_{1,s}. 
\end{equation}
Define $\widetilde{\Wb}_{2,s}$ and $\widetilde{\Ab}_{2,s}$ similarly. 
Note that from the random walk perspective, $\widetilde{\Ab}_{1,s}$ (and $\widetilde{\Ab}_{2,s}$ as well) describe a lazy random walk on the clean dataset.
We further introduce some other $n\times n$ matrices, 
\begin{equation*}
\Wb_{1,c}(i,j)=\exp\left(-2 \upsilon \frac{(\ub_{ix}-\ub_{jx})^\top (\zb_i-\zb_j)}{p_1} \right)
\quad\mbox{and}\quad
\widetilde{\Wb}_{1,c}:=\widetilde{\Wb}_{1,s} \circ \Wb_{1,c}\,.
\end{equation*}
We then define the associate degree matrix and transition matrix as $\widetilde{\Db}_{1,c}$ and $\widetilde{\Ab}_{1,c},$ respectively; that is, 
\begin{equation}\label{eq_crossingtermdefinition}
\widetilde{\Ab}_{1,c}=\widetilde{\Db}_{1,c}^{-1} \widetilde{\Wb}_{1,c}\,.
\end{equation}
$\widetilde{\Wb}_{1,c}$ and $\widetilde{\Ab}_{1,c}$ will be used when $\zeta_1$ is too large ($\zeta_1 \geq 2$) so that the bandwidth $h_1 = p_1$ is insufficient to capture the relationship between two different samples.

With the above notations and (\ref{eq_defnvarsigmak}), denote 
\begin{equation}\label{eq_defnM1M2}
\widetilde{\Nb}:=
\begin{cases}
\exp(2 \upsilon)\widetilde{\Ab}_{1,s} \left(\varsigma_2 \Ib+2 \frac{\upsilon \exp(-\upsilon \tau_2)}{p_2}\Wb^\top \Wb \right), & 1 \leq \zeta_1<2; \\
\exp(2 \upsilon)\widetilde{\Ab}_{1,c} \left(\varsigma_2 \Ib+2 \frac{\upsilon \exp(-\upsilon \tau_2)}{p_2}\Wb^\top \Wb \right), & \zeta_1 \geq 2. 
\end{cases}
\end{equation}
Using $\mathsf{s}_2$ defined in (\ref{eq_defns1s2}), denote 
\begin{equation}\label{eq_defns}
\mathsf{S}:=
\begin{cases}
4, & 0 \leq \zeta_2<0.5, \\
\mathsf{s}_2+4, & 0.5 \leq \zeta_2<1.  
\end{cases}
\end{equation}

\begin{thm}\label{thm_secenario2slowly} 
Suppose Assumption \ref{assu_main} holds with $0 \leq \zeta_2<1 \leq \zeta_1<\infty$, $h_1=p_1$, $h_2=p_2$ and $d_1=d_2=1$.  Then we have that for $i > \mathsf{S}$,
\begin{equation}\label{eq_firstpartthm33}
\left| \lambda_i(n \Nb)-\lambda_i(\widetilde{\Nb}) \right| =\OO_{\prec}\left(\max\left\{ n^{\mathsf{e}_2}, n^{\frac{\zeta_2-1}{2}} \right\} \right),
\end{equation}
where $\mathsf{e}_2$ is defined in (\ref{Definition: zeta in Scenario II}) and $\widetilde{\Nb}$ is defined in (\ref{eq_defnM1M2}).  Furthermore, when {$\zeta_1$} is larger in the sense that  for any given small constant $\delta \in (0,1),$
\begin{equation}\label{eq_largeralphadefinition}
\zeta_1>\frac{2}{\delta}+1,
\end{equation}
then with probability at least $1-\OO\left(n^{1-\delta(\zeta_1-1)/2} \right),$ for some sufficiently small constant $\epsilon>0$ and some constant $C>0$ and all $i\geq \mathsf{S},$ we have   
\begin{equation}\label{eq_secondpartthm33}
\left| \lambda_i(n \Nb)-\exp(2  \upsilon) \gamma_{\nu_2}(i)  \right| \leq  C \max\{ n^{-1/2+\epsilon} ,n^{\mathsf{e}_2}, \ n^{\frac{{\zeta_2}-1}{2}}\}. 
\end{equation}   
\end{thm}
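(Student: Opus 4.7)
My plan is to analyze the two sensors separately, exploiting the fact that they live in qualitatively different regimes, and then assemble the results through the product structure $\mathbf{N}=\mathbf{A}_1\mathbf{A}_2^\top$. For the second sensor, since $0\leq \zeta_2<1$ the noise dominates the signal, so $\|\yb_i-\yb_j\|_2^2/p_2$ concentrates around $\tau_2$ defined in (\ref{eq_defntauk}). I would Taylor expand the Gaussian kernel around $\upsilon\tau_2$, exactly as in the null-case analysis of \cite{DW1} together with the single-sensor treatment of \cite{DW2}. Up to an error controlled by $n^{\mathsf{e}_2}$ arising from higher order terms in the expansion (with the number of retained terms governed by $\mathsf{s}_2$ of (\ref{eq_defns1s2}), which in turn explains the threshold $\mathsf{S}$ in (\ref{eq_defns})), this reduces $\Wb_2$ to the low-rank correction $n\exp(-\upsilon\tau_2)\bigl(\varsigma_2\Ib+2\upsilon\exp(-\upsilon\tau_2)p_2^{-1}\Wb^\top\Wb\bigr)$ plus a rank-bounded signal perturbation from $\ub_{iy}$ terms. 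After normalizing by $\Db_2^{-1}$, whose diagonal concentrates around $n\exp(-\upsilon\tau_2)$ with a $n^{(\zeta_2-1)/2}$ rate, one obtains the scaled second factor $n\Ab_2^\top\approx \varsigma_2\Ib+2\upsilon\exp(-\upsilon\tau_2)p_2^{-1}\Wb^\top\Wb$, which gives the parenthetical factor in (\ref{eq_defnM1M2}).

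For the first sensor, since $\zeta_1\geq 1$ the signal is no longer dominated by noise. When $1\leq\zeta_1<2$, the clean pairwise distances $\|\ub_{ix}-\ub_{jx}\|_2^2/p_1$ remain $\OO_{\prec}(n^{\zeta_1-1})$, so the noise-signal cross term $2(\ub_{ix}-\ub_{jx})^\top(\zb_i-\zb_j)/p_1$ contributes only at a concentrated level. After shifting by the additive noise-noise term, $\Wb_1$ is close (in operator norm, with error $\OO_{\prec}(n^{(\zeta_1-1)/2})$) to $\exp(2\upsilon)\widetilde{\Wb}_{1,s}$, and similarly $\Db_1\Ab_1\approx \exp(2\upsilon)\widetilde{\Db}_{1,s}\widetilde{\Ab}_{1,s}$ after a Taylor expansion of the cross term. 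For $\zeta_1\geq 2$, the cross term itself is no longer small and must be retained, which is precisely what the matrix $\widetilde{\Ab}_{1,c}$ in (\ref{eq_crossingtermdefinition}) records; the key estimate here is a concentration bound for $\Wb_{1,c}$, which is controlled because $\ub_{ix}-\ub_{jx}$ is fixed on the sub-Gaussian randomness of $\zb_i-\zb_j$.

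To combine the two reductions into (\ref{eq_firstpartthm33}), I would use $n\Nb=(\Ab_1)(n\Ab_2^\top)$ and Weyl's inequality applied to operator-norm perturbations, together with rigidity of degree matrices so that multiplication by $\Db_1^{-1}$ and $\Db_2^{-1}$ preserves rates up to additional polynomial factors. The excluded low-index eigenvalues in $i\leq\mathsf{S}$ account for the outliers produced by the rank-bounded remainders from the signal and from the low-order terms of the kernel expansion, just as in \cite{DW2,DW1}. The main technical obstacle in this part is aligning the two different error scales $n^{\mathsf{e}_2}$ and $n^{(\zeta_2-1)/2}$ with the second-sensor contribution and verifying that the product structure does not amplify these errors, for which I would use uniform bounds on the spectral norm $\|\widetilde{\Ab}_{1,s}\|$ (bounded by $1$ up to a lower-order correction).

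For (\ref{eq_secondpartthm33}), the additional observation is that when $\zeta_1>2/\delta+1$, the pairwise clean distances $\|\ub_{ix}-\ub_{jx}\|_2^2/p_1$ are so large that a sub-Gaussian tail bound gives, uniformly in $i\neq j$, $\widetilde{\Wb}_{1,s}(i,j)\leq n^{-1/2+\epsilon}$ with probability at least $1-\OO(n^{1-\delta(\zeta_1-1)/2})$; the union bound over $n^2$ pairs is absorbed by the assumption (\ref{eq_largeralphadefinition}). On this event $\widetilde{\Ab}_{1,s}=\Ib+\OO(n^{-1/2+\epsilon})$, so $\widetilde{\Nb}$ reduces to $\exp(2\upsilon)\bigl(\varsigma_2\Ib+2\upsilon\exp(-\upsilon\tau_2)p_2^{-1}\Wb^\top\Wb\bigr)$, whose eigenvalues are governed by the shifted MP law $\nu_2$ in (\ref{eq_nu1nu2}) and concentrate at the classical locations $\exp(2\upsilon)\gamma_{\nu_2}(i)$ by standard MP rigidity. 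The bottleneck I anticipate is juggling the $\zeta_1\geq 2$ case, where even after restricting to the high-probability event one must carefully track the contribution of $\widetilde{\Ab}_{1,c}$ to ensure it reduces to the identity on that event; this requires a second concentration bound showing the Hadamard factor $\Wb_{1,c}$ itself behaves like $\exp(-\upsilon\tau_1)\Ib$ off-diagonally with the stated tail probability.
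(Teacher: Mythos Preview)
Your proposal is correct and follows essentially the same route as the paper: handle sensor~2 via the Taylor/Gram-matrix expansion of Lemma~\ref{lem_conmatrix} (splitting off the rank-bounded pieces $\rSh_2$ and, when $\zeta_2\geq 0.5$, $\widetilde{\rSh}_\sd$), handle sensor~1 via the approximations $\Ab_1\approx\widetilde{\Ab}_{1,s}$ or $\widetilde{\Ab}_{1,c}$ supplied by Lemma~\ref{lem_2ndpaperresults}, and combine through the product with Weyl and the rank-bounded remainders accounting for the excluded indices. One simplification the paper uses that sidesteps your anticipated bottleneck with $\widetilde{\Ab}_{1,c}$ in the large-$\zeta_1$ regime: it invokes $\|\Ab_1-\Ib\|\leq Cn\exp\bigl(-\upsilon(\sigma_1^2/n)^{1-\delta}\bigr)$ directly from Lemma~\ref{lem_2ndpaperresults} and compares $n\Nb$ to $n\Ab_2$, rather than first passing through $\widetilde{\Nb}$ and then arguing that the Hadamard factor $\Wb_{1,c}$ collapses.
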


This is a potentially confusing region. In practice, it captures the situation when one sensor is corrupted so that the signal part becomes weak. Since we still have one sensor available with a strong SNR, it is expected that we could still obtain something useful. However, it is shown in
Theorem \ref{thm_secenario2slowly} that the corrupted sensor unfortunately contaminates the overall performance of the sensor fusion algorithm.
%
Note that since the first sensor has a large SNR, the noisy transition matrix $\Ab_1$ is close to the transition matrix $\widetilde{\Ab}_{1,s}$, which only depends on the signal part when $1 \leq \zeta_1<2,$ and the transition $\widetilde{\Ab}_{1,c}$ which is a mixture of the signal and noise when $\zeta_1 \geq 2$. This fact has been shown in \cite{DW2}.  However, for the second sensor, due to the strong noise, $\Ab_2$ will be close to a perturbed Gram matrix that mainly comes from the high dimensional noise. Consequently, as illustrated in (\ref{eq_firstpartthm33}), the NCCA matrix will be close to $\widetilde{\Nb}$ which is a product matrix of the clean transition matrix and the shifted Gram matrix. Clearly, the clean transition matrix is contaminated by the shifted Gram matrix, which does not contain any information about the signal. This limits the information we can obtain.

In the extreme case when $\zeta_1$ is larger in the sense of (\ref{eq_largeralphadefinition}), the chosen bandwidth $h_1=p_1$ is too small compared with the signal so that the transition matrix  $\Ab_1$ will be close to the identity matrix. Consequently, as in (\ref{eq_secondpartthm33}), the NCCA matrix will be mainly characterized by the perturbed Gram matrix whose limiting ESD follows the MP law $\nu_2$ with proper scaling.

We should however emphasize that as has been elaborated in \cite{DW2}, when the SNR is large, particularly when $\zeta_1>2$, we should consider a different bandwidth, particularly the bandwidth determined by the percentile of pairwise distance that is commonly considered in practice. It is thus natural to ask if the bandwidth $h_1$ is chosen ``properly'', would we obtain useful information eventually. We will answer this question in the later section.

\subsection{Informative region: $\zeta_2 \geq 1$}

In this subsection,  we state the results when both of the sensors have large SNR ($\zeta_2\geq 1$). Recall (\ref{eq_tildeaxs2}), (\ref{eq_crossingtermdefinition}) and denote $\widetilde{\Ab}_{2,s}, \widetilde{\Ab}_{2,c}$ analogously for the point cloud $\mathcal{Y}.$ For some constant $C>0,$ denote
\begin{equation}\label{eq_defnRindex}
\mathsf{R}:=
\begin{cases}
C \log n, &  \zeta_2=1\\
Cn^{\zeta_2-1}, & 1 <\zeta_2<2.
\end{cases}
\end{equation}

\begin{thm}\label{thm_secenario2moderately}  Suppose Assumption \ref{assu_main} holds with $1 \leq \zeta_2 \leq \zeta_1<\infty$, $h_1=p_1$, $h_2=p_2$ and $d_1=d_2=1$. Then we have that:
\begin{enumerate}
\item[(1).] When $1 \leq \zeta_2\leq \zeta_1 <2,$ we have 
\begin{equation}\label{eq_thm35eqone}
\left\| \Nb-\widetilde{\Ab}_{1,s} \widetilde{\Ab}_{2,s}^\top \right\| \prec n^{-1/2}. 
\end{equation}
Additionally, for $i>\mathsf{R}$, we have
\begin{equation}\label{eq_thm35eqtwo}
\lambda_i(\widetilde{\Ab}_{1,s} \widetilde{\Ab}_{2,s}^\top ) \prec n^{(\zeta_2-3)/2}. 
\end{equation}
\item[(2).] When $1 \leq \zeta_2<2 \leq \zeta_1<\infty,$ we have that (\ref{eq_thm35eqtwo}) holds true and 
\begin{equation}\label{eq_1leqzeta2leq2leqzetaone}
\| \Nb-\widetilde{\Ab}_{1,c} \widetilde{\Ab}_{2,s}^\top \| \prec n^{-1/2}.
\end{equation}  
Moreover, when ${\zeta_1}$ is larger in the sense of (\ref{eq_largeralphadefinition}), we have that with probability at least $1-\OO\left(n^{1-\delta({\zeta_1}-1)/2} \right),$ for some sufficiently small $\epsilon>0$ and some constant $C>0$
\begin{equation}\label{eq_thm35eqthree}
\left\|  \Nb-\widetilde{\Ab}_{2,s}  \right\| \leq  C \left( n^{-1/2+\epsilon}+n \exp(-\upsilon (\sigma_1^2/n)^{1-\delta}) \right). 
\end{equation} 
\item[(3).] When $\zeta_1 \geq \zeta_2 \geq 2,$ we have that 
\begin{equation}\label{eq_crossingtermproductconvergence}
\left\| \Nb- \widetilde{\Ab}_{1,c} \widetilde{\Ab}_{2,c}^\top\right \| \prec n^{-3/2}+n^{-\zeta_2/2}. 
\end{equation}
Moreover, if ${\zeta_2}$ is larger in the sense of (\ref{eq_largeralphadefinition}),  we have that with probability at least $1-\OO\left(n^{1-\delta(\min\{{\zeta_1}, {\zeta_2}\}-1)/2} \right),$ for some constant $C>0$, 
\begin{equation}\label{eqeqeqeqqqqrqrqer}
\left\| \Nb-\mathbf{I} \right\| \leq Cn \left( \exp(-\upsilon (\sigma_1^2/n)^{1-\delta})+ \exp(-\upsilon (\sigma_2^2/n)^{1-\delta})\right). 
\end{equation}
\end{enumerate}
\end{thm}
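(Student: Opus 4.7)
The plan is to reduce Theorem \ref{thm_secenario2moderately} to the single-sensor spiked-model analysis of \cite{DW2} applied separately to $\Ab_1$ and $\Ab_2$, and then combine the two sides through an elementary product-perturbation estimate. The three parts of the theorem correspond to the three SNR windows identified in \cite{DW2}: moderate spike ($1\le \zeta_k<2$) giving the clean signal surrogate $\widetilde{\Ab}_{k,s}$; deep spike ($\zeta_k \ge 2$) giving the signal-noise cross surrogate $\widetilde{\Ab}_{k,c}$; and very deep spike (condition \eqref{eq_largeralphadefinition}) giving the identity surrogate.

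First I would invoke the single-sensor estimates: (a) $\|\Ab_k - \widetilde{\Ab}_{k,s}\| \prec n^{-1/2}$ when $1\le \zeta_k<2$; (b) $\|\Ab_k - \widetilde{\Ab}_{k,c}\| \prec n^{-3/2}+n^{-\zeta_k/2}$ when $\zeta_k \ge 2$; and (c) with probability $1-\OO(n^{1-\delta(\zeta_k-1)/2})$ under \eqref{eq_largeralphadefinition}, $\|\Ab_k - \Ib\| \le C(n^{-1/2+\epsilon}+n\exp(-\upsilon(\sigma_k^2/n)^{1-\delta}))$. Concentration of the degrees $\Db_k(i,i)$ around their means (which are $\asymp n$) then guarantees $\|\Ab_k\|, \|\widetilde{\Ab}_{k,s}\|, \|\widetilde{\Ab}_{k,c}\| = \OO_{\prec}(1)$. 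Writing $\Mb_k$ for the clean surrogate that matches sensor $k$'s SNR window, the product perturbation
\begin{equation*}
\|\Nb - \Mb_1 \Mb_2^\top\| \le \|\Ab_1-\Mb_1\|\,\|\Ab_2\| + \|\Mb_1\|\,\|\Ab_2-\Mb_2\|
\end{equation*}
then produces \eqref{eq_thm35eqone} in part (1), \eqref{eq_1leqzeta2leq2leqzetaone} in part (2), and, via estimate (b) applied to both sensors, \eqref{eq_crossingtermproductconvergence} in part (3). For the identity-type approximations \eqref{eq_thm35eqthree} and \eqref{eqeqeqeqqqqrqrqer} I would take a union bound over the favorable events for the two sensors and telescope $\Nb-\Ib = (\Ab_1-\Ib)\Ab_2^\top + (\Ab_2^\top - \Ib)$, absorbing the residual through (c).

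The eigenvalue tail statement \eqref{eq_thm35eqtwo} requires a separate argument exploiting the near-diagonal structure of the clean kernel. When $\zeta_2 \ge 1$, the pairwise distance $\|\ub_{iy}-\ub_{jy}\|^2/p_2$ concentrates around a quantity of order $n^{\zeta_2-1}$ for $i\ne j$ (up to $\log n$ factors at $\zeta_2=1$), so the off-diagonal entries of $\Wb_{2,s}$ decay super-polynomially for $\zeta_2>1$ and like $n^{-c\upsilon}$ for $\zeta_2=1$. Hence $\widetilde{\Wb}_{2,s}$, and therefore $\widetilde{\Ab}_{2,s}$, is a diagonal matrix plus a perturbation whose operator norm I can bound by Schur's test at the scale $n^{(\zeta_2-3)/2}$ or better, and only $\OO(\mathsf{R})$ of its singular values can exceed that threshold. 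Combining this with the deterministic inequality $\sigma_i(\widetilde{\Ab}_{1,s}\widetilde{\Ab}_{2,s}^\top) \le \|\widetilde{\Ab}_{1,s}\|\,\sigma_i(\widetilde{\Ab}_{2,s})$ and $\|\widetilde{\Ab}_{1,s}\|=\OO_{\prec}(1)$ gives the claimed bound on $\lambda_i$ for $i>\mathsf{R}$.

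The main obstacle I foresee is securing the sharp $n^{-3/2}+n^{-\zeta_2/2}$ rate in \eqref{eq_crossingtermproductconvergence}: the naive $n^{-1/2}$ bound on each $\|\Ab_k - \Mb_k\|$ is too loose, and one genuinely needs the improved single-sensor estimate (b). That improvement rests on a second-order expansion of the Gaussian kernel around its off-diagonal plateau value together with a cancellation between the noise-only and signal-noise fluctuations of $\Ab_k$; this expansion is precisely the one carried out in \cite{DW2} in the single-sensor setting, and importing it and tensoring through the product seems to be the cleanest route. Everything else in the proof is essentially bookkeeping, once the correct surrogate has been matched to each $(\zeta_1,\zeta_2)$ window.
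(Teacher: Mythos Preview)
Your proposal is correct and follows essentially the same route as the paper: both reduce the theorem to the single-sensor estimates of \cite{DW2} (packaged in the paper as Lemma \ref{lem_2ndpaperresults}) and combine them through the product-perturbation inequality you wrote down. The only minor difference is in the eigenvalue tail bound \eqref{eq_thm35eqtwo}: the paper imports the single-sensor tail bound $\lambda_i(\widetilde{\Ab}_{2,s})\prec n^{(\zeta_2-3)/2}$ for $i>\mathsf{R}$ directly from \cite{DW2} and then applies the product inequality $\lambda_k(AB)\le \lambda_k(A)\lambda_1(B)$ of Lemma \ref{lem_hardmard}(2), whereas you sketch a Schur-test derivation of that same single-sensor bound before applying the analogous singular-value product inequality --- but this is the same argument, just with the citation unpacked.
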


Theorem \ref{thm_secenario2moderately} shows that when $h_k=p_k$, where $k=1,2,$  and both SNRs are large, the NCCA matrix from the noisy dataset could be well approximated by that from the clean dataset of the common manifold. The main reason has been elaborated in \cite{DW2} when we have only one sensor. In the two sensors case, combining (\ref{eq_thm35eqone}) and (\ref{eq_thm35eqtwo}), we see that except the first $\mathsf{R}$ eigenvalues, the remaining eigenvalues are negligible and not informative. Moreover, (2) and (3) reveal important information about the bandwidth; that is, if the bandwidth choice is improper, like {$h_1=p_1$ and $h_2=p_2$,} the result could be misleading in general. For instance, when {$\zeta_1$ and $\zeta_2$} are large, ideally we should have a ``very clean'' dataset and we shall expect to obtain useful information about the signal. However, this result says that we cannot obtain any useful information from NCCA; particularly, see \eqref{eqeqeqeqqqqrqrqer}. This however is intuitively true, since when the bandwidth is too small, the relationship of two distinct points cannot be captured by the kernel; that is, when $i\neq j$, $\exp(-\|\xb_i-\xb_j\|^2/p_1)\approx 0$ with high probability (see proof below for a precise statement of this argument or \cite{DW2}). This problem can be fixed if we choose a proper bandwidth. In Section \ref{sec_adpativebandwidth}, we will state the corresponding results when the bandwidths are selected properly, in which case this counterintuitive result is eliminated.

\begin{rem}\label{rmk_generalspike}
In the above theorems, we focus on reporting the results for the case $d_1=d_2=1$ in (\ref{eq_spectraldecompositions1s2}). In this remark, we discuss how to generalize the results to the setting when $d_1 > 1$ or $d_2 > 1.$ First, when $0 \leq \sigma_{1i}^2, \sigma_{2j}^2<1, 1 \leq i \leq d_1,  1 \leq j \leq d_2,$ Theorem \ref{thm_senarioiionesubcritical} still holds after minor modification, for example, $\mathfrak{d}_k$ in (\ref{eq_defns1s2}) should be replaced by $\sum_{i=1}^{d_k} \mathfrak{d}_{ki}, \mathfrak{d}_{ki}:= \left \lceil  \frac{1}{1-{\zeta_{ki}}} \right \rceil+1$ and the error bound in (\ref{eq_finalresultthm31}) should be replaced by 
\begin{equation*}
\max\left\{ \max_i\{ n^{\frac{\zeta_{1i}-1}{2}}\}, \max_j\{n^{\mathsf{e}_{2j}}\}  \right\},
\end{equation*}
where $\mathsf{e}_{2j}$'s are defined similarly as in (\ref{Definition: zeta in Scenario II}).  Similar arguments apply for Theorem \ref{thm_secenario2slowly}.   Second, when $\zeta_{1i}, \zeta_{1j} \geq 1$, where $1 \leq i \leq d_1, 1 \leq j \leq d_2,$ Theorem \ref{thm_secenario2moderately} holds by setting $\zeta_k:=\max_j\{\zeta_{kj}\}, k=1,2.$ Finally, suppose that there exist some integers $r_k<d_k, k=1,2,$ such that 
\begin{equation*}
\zeta_{k1} \geq \zeta_{k2} \geq \cdots \geq \zeta_{k, r_k} \geq 1>\zeta_{k, r_k+1} \geq \cdots \zeta_{k, d_k} \geq 0. 
\end{equation*} 
Then we have that Theorem \ref{thm_secenario2moderately} still holds by setting $\zeta_k:=\zeta_{k1}, k=1,2,$ and the affinity and transition matrices in (\ref{eq_tildeaxs2}) should be defined using the signal part with large SNRs. For example, $\Wb_{1,s}$ should be defined via 
\begin{equation*}
\Wb_{1,s}(i,j)=\exp\left(-\upsilon \frac{\|\widetilde{\ub}_{ix}-\widetilde{\ub}_{jx} \|_2^2}{h_1} \right), \ \widetilde{\ub}_{ix}=(\ub_{ix}(1), \cdots, \ub_{ix}(r_1), 0, \cdots, 0).
\end{equation*}
The detailed statements and proofs are similar to the setting $d_1=d_2=1$ except for extra notational complicatedness. Since this is not the main focus of the current paper, we omit details here. 
\end{rem}

\begin{rem}\label{rmk_admatrix}
Throughout the paper, we focus on reporting the results of the NCCA matrix. However, our results can also be applied to the AD matrix with a minor modification based on their definitions in (\ref{eq_nccaadmatrixdefinition}). Specifically, Theorem \ref{thm_senarioiionesubcritical} holds for $n^2  \Ab$, Theorem \ref{thm_secenario2slowly} holds for $n \Ab$ and Theorem \ref{thm_secenario2moderately} holds for $\Ab$ by replacing $\widetilde{\Ab}_{1,s} \widetilde{\Ab}_{2,s}^\top$ with $\widetilde{\Ab}_{1,s} \widetilde{\Ab}_{2,s},$ $\widetilde{\Ab}_{1,c} \widetilde{\Ab}_{2,s}^\top$ with $\widetilde{\Ab}_{1,c} \widetilde{\Ab}_{2,s}$ and $\widetilde{\Ab}_{1,c} \widetilde{\Ab}_{2,c}^\top$ with $\widetilde{\Ab}_{1,c} \widetilde{\Ab}_{2,c}$. Since the proof is similar, we omit details. 
\end{rem}

\section{Main results (II)--adaptive choice of bandwidth}\label{sec_adpativebandwidth}  

As discussed after Theorem \ref{thm_secenario2moderately}, when the SNRs are large, the bandwidth choice $h_k=p_k$ for $k=1,2$ is improper.  
One solution to this trouble has been discussed in \cite{DW2} when we have one sensor; that is, the bandwidth is decided by the percentile of all pairwise distances. It is thus natural to hypothesize that the same solution would hold for the kernel sensor fusion approach. 
As in Section \ref{sec_mainresults}, we focus on the case $d_1=d_2=1,$ and the discussion for the general setting is similar to that of Remark  \ref{rmk_generalspike}.
As before, we also assume that $\zeta_1 \geq \zeta_2.$ 
Also, we focus on reporting the results of the NCCA matrix. The discussion for the AD matrix is similar to that of Remark \ref{rmk_admatrix}.

We first recall the adaptive bandwidth selection approach \cite{DW2} motivated by the empirical approach commonly used in daily practice. 
Let {$\nu_{\text{dist},1}$ and $\nu_{\text{dist},2}$} be the empirical distributions of pairwise distances $\{\| \xb_i-\xb_j\|\}_{i \neq j}$ and $\{\| \yb_i-\yb_j \|\}_{i \neq j}$ respectively. Then we choose the bandwidths {$h_1>0$ and $h_2>0$} by 
\begin{equation}\label{eq_hxhyadaptivechoise}
\int_{0}^{h_1} \dd \nu_{\text{dist},1}=\omega_1 \quad\mbox{and}\quad  \int_{0}^{h_2} \dd \nu_{\text{dist},2}=\omega_2,
\end{equation}
where $0<\omega_1<1$ and $0<\omega_2<1$ are fixed constants chosen by the user. Define $\widetilde{\Ab}_{1,s}$ in the same way as that in (\ref{eq_transitionshifting}),  $\widetilde{\Nb}$ as that in (\ref{eq_defnwidetilde}), and $\Ab_{1,s}$ as that in (\ref{eq_transitionsignal}) using (\ref{eq_hxhyadaptivechoise}). Similarly, we can define the counterparts for the point cloud $\mathcal{Y}.$

Recall that $\Wb_{1,s}$ and $\Wb_{2,s}$ are the affinity matrices associated with $\{\ub_{ix}\}$ and $\{\ub_{iy}\}.$ With a little bit abuse of notation, for $k=1,2,$ we denote 
\begin{align}\label{eq_widetildewxss22}
\widetilde{\Wb}_{k,s}
=\exp\left(-\upsilon\frac{2 p_k}{h_k} \right)  \Wb_{k,s}+\left(1-\exp\left(-\upsilon\frac{2 p_k  }{h_k} \right)\right)\Ib, 
\end{align}
where $\Wb_{k,s}$ are constructed using the adaptively selected bandwidth $h_k$. Clearly, $\Wb_{k,s}$ and $\widetilde{\Wb}_{k,s}$ differ by an isotropic spectral shift, and when $\zeta_k>1$, asymptotically $\Wb_{k,s}$ and $\widetilde{\Wb}_{k,s}$ are the same. Note that compared to (\ref{eq_widetildewxss2}), the difference is that we use the modified bandwidth in \eqref{eq_widetildewxss22}. 
This difference is significant, particularly when $\zeta_k$ is large. Indeed, when $\zeta_k$ is large, $\Wb_{k,s}$ defined in \eqref{eq_widetildewxss2} is close to an identity matrix, while $\Wb_{k,s}$ defined in \eqref{eq_widetildewxss22} encodes information of the signal. Specifically, asymptotically we can show that $\Wb_{k,s}$ defined in \eqref{eq_widetildewxss22} converges to an integral operator defined on the manifold, whose spectral structure is commonly used in manifold learning society to study the signal. See \cite{DW2} for more discussion.
We then define 
\begin{equation}\label{eq_transitionshifting}
\widetilde{\Ab}_{k,s}=\widetilde{\Db}_{k,s}^{-1} \widetilde{\Wb}_{k,s} 
\end{equation}
and
\begin{equation}\label{eq_transitionsignal}
\Ab_{k,s}=\Db_{k,s}^{-1} \Wb_{k,s}, 
\end{equation}
where $k=1,2$.
Compared to (\ref{eq_transitionshifting}), (\ref{eq_transitionsignal}) does not contain the scaling and shift of the signal parts. 
Moreover, denote
\begin{equation}\label{eq_defnnewvarsigmatauk}
\varsigma_{k,h} \equiv \varsigma_{k,h}(\tau_k, h_k):=1-\frac{2\upsilon p_k}{h_k} \exp\left(-\upsilon \frac{\tau_{k} p_k}{h_k}\right)-\exp\left(-\upsilon \frac{\tau_k p_k}{h_k}\right), \ k=1,2. 
\end{equation}

\begin{thm}\label{thm_mainbandwidthtwo} Suppose Assumption \ref{assu_main} holds with the adaptively chosen bandwidths $h_1,h_2$ and $d_1=d_2=1.$ Recall (\ref{eq_defnshiftoperator}), (\ref{eq_mp}) and (\ref{eq_defnnewvarsigmatauk}). Then we have that: 
\begin{enumerate}
\item[(1).] $0 \leq \zeta_2 < 1$ (at least one sensor has a low SNR). When $0 \leq \zeta_1 < 1$ as well, Theorem \ref{thm_senarioiionesubcritical} holds under the assumption of (\ref{eq_s2noisemodel}) and (\ref{eq_ratioone}) by replacing $\nu_k,  k=1,2,$ with
\begin{equation*}
\widetilde{\nu}_k:=\mathrm{T}_{\varsigma_{k,h}} \nu_{c_k, \eta_k},  \ \ \mbox{where}\ \ \eta_k=\frac{2 p_k \upsilon \exp(-2 p_k \upsilon/h_k)}{h_k} 
\end{equation*}
and $\exp(4 \upsilon)$ with $\exp(4 \upsilon p_1 p_2/(h_1 h_2))$ in (\ref{eq_finalresultthm31}). When $\zeta_1 \geq 1$, Theorem \ref{thm_secenario2slowly} holds by replacing $\Nb$ by $\widetilde{\Nb}$ in (\ref{eq_firstpartthm33}), where
\begin{equation}\label{eq_defnwidetilde}
\widetilde{\Nb}:=
\begin{cases}
\exp\left( \upsilon\frac{2 p_2}{h_2}\right) \widetilde{\Ab}_{1,s} \left( \varsigma_{2,h} \Ib+\frac{2 \upsilon \exp(-\upsilon p_2 \tau_{2} /h_2)}{h_2} \Wb^\top \Wb \right), & 1 \leq \zeta_1 <2; \\
\exp\left( \upsilon\frac{2 p_2}{h_2}\right)  \widetilde{\Ab}_{1,c} \left( \varsigma_{2,h} \Ib+\frac{2 \upsilon \exp(-\upsilon p_2 \tau_{2} /h_2)}{h_2} \Wb^\top \Wb \right), & \zeta \geq 2.
\end{cases}
\end{equation}
and when $\zeta_1$ is large in the sense of (\ref{eq_largeralphadefinition}), (\ref{eq_secondpartthm33}) holds replacing $\nu_2$ with $\widetilde{\nu}_2$ and $\exp(2 \upsilon)$ with $\exp(2 p_2 \upsilon/h_2).$
\item[(2).]  $\zeta_2 \geq 1$  (both sensors have high SNRs). In this case, we have that 
\begin{equation}\label{eq_thm41eqone}
\left\| \Nb-\widetilde{\Ab}_{1,s} \widetilde{\Ab}_{2,s}^\top \right\| \prec n^{-1/2}.
\end{equation}
Moreover, for some constant $C>0$ and $i \geq C \log n,$ we have 
\begin{equation}\label{eq_thm41eqtwo}
\lambda_i(\widetilde{\Ab}_{1,s} \widetilde{\Ab}_{2,s}^\top) \prec n^{-1}.
\end{equation}
Finally, when $\zeta_2>1,$ we have that  
\begin{equation}\label{eq_thm41eqthree}
\left\| \Nb-\Ab_{1,s} \Ab_{2,s}^\top \right\| \prec n^{-1/2}+n^{1-\zeta_2}.
\end{equation}
\end{enumerate}
\end{thm}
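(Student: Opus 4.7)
The plan is to bootstrap Theorem \ref{thm_mainbandwidthtwo} from the fixed-bandwidth results of Section \ref{sec_mainresults} by first analyzing the adaptively chosen bandwidths $h_1,h_2$ via concentration of pairwise distances, and then propagating this analysis through the transition-matrix decompositions used in the proofs of Theorems \ref{thm_senarioiionesubcritical}--\ref{thm_secenario2moderately}. The preliminary step is to show that the empirical distributions $\nu_{\text{dist},k}$ concentrate around deterministic limits: for $\zeta_k<1$ one has $\|\xb_i-\xb_j\|^2=2p_k+\OO_{\prec}(\sqrt{p_k}+\sigma_{k1}^2)$, which forces $h_k\asymp p_k$ via \eqref{eq_hxhyadaptivechoise} with the user-fixed $\omega_k\in(0,1)$, whereas for $\zeta_k\geq 1$ the signal term $\|\ub_{ix}-\ub_{jx}\|^2$ dominates and $h_k\asymp n^{\zeta_k}$.

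For part (1), once $h_k\asymp p_k$ is established, the proofs of Theorems \ref{thm_senarioiionesubcritical} and \ref{thm_secenario2slowly} go through almost verbatim after redoing the Taylor expansion of the Gaussian kernel with $p_k$ replaced by $h_k$ inside the exponent. The modified shift constant $\varsigma_{k,h}$ in \eqref{eq_defnnewvarsigmatauk}, the modified scale $\eta_k$ in the MP law, and the rescaled outlier factor $\exp(4\upsilon p_1p_2/(h_1h_2))$ arise naturally by tracking the ratio $p_k/h_k$ through every step of the expansion; when $p_k=h_k$ one recovers the formulas of Section \ref{sec_mainresults}. When $\zeta_1\geq 1$, the same bootstrap combined with the single-sensor results of \cite{DW2} applied to $\Ab_1$ produces the replacement $\widetilde{\Nb}$ in \eqref{eq_defnwidetilde}.

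For part (2), the main tool is the deterministic decomposition
\[
\Nb-\widetilde{\Ab}_{1,s}\widetilde{\Ab}_{2,s}^\top = (\Ab_1-\widetilde{\Ab}_{1,s})\Ab_2^\top + \widetilde{\Ab}_{1,s}(\Ab_2^\top-\widetilde{\Ab}_{2,s}^\top),
\]
which reduces \eqref{eq_thm41eqone} to the single-sensor estimate $\|\Ab_k-\widetilde{\Ab}_{k,s}\|\prec n^{-1/2}$ already established in \cite{DW2} under the adaptive bandwidth, combined with $\|\Ab_k\|,\|\widetilde{\Ab}_{k,s}\|\leq 1$. For the bulk decay \eqref{eq_thm41eqtwo}, I would use that $\widetilde{\Ab}_{k,s}$ is, up to the isotropic shift in \eqref{eq_widetildewxss22}, a normalized kernel integral operator restricted to the low-dimensional common manifold; by Weyl-type decay of smooth compact kernel operators, its eigenvalues outside the top $\OO(\log n)$ are $\OO(n^{-1})$, and the product inherits the same decay. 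Finally, \eqref{eq_thm41eqthree} follows from the observation that when $\zeta_k>1$ the identity-mass $1-\exp(-2\upsilon p_k/h_k)\asymp p_k/h_k\asymp n^{1-\zeta_k}$ is $o(1)$, so that $\|\widetilde{\Ab}_{k,s}-\Ab_{k,s}\|=\OO(n^{1-\zeta_k})$, combined with a further application of the decomposition above.

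The main obstacle I anticipate is the entrywise control in part (2): one needs a uniform (over $i\neq j$) Taylor remainder bound for $\exp(-\upsilon\|\xb_i-\xb_j\|^2/h_k)$ that survives both the random normalization $\Db_k^{-1}$ and the multiplicative coupling across sensors, while simultaneously accounting for the fluctuations of the random bandwidth $h_k$ itself. Achieving the sharp $n^{-1/2}$ rate in \eqref{eq_thm41eqone}, rather than a polylogarithmically worse one, requires cancellations between the signal--noise cross terms $\ub_{ix}^\top\zb_i/h_k$ and the noise--noise terms $\|\zb_i\|^2/h_k$ at the entrywise level, and orchestrating those cancellations under an adaptive (rather than deterministic) bandwidth is the most technically demanding step.
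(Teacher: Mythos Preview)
Your proposal is essentially correct and matches the paper's approach: reduce to the fixed-bandwidth theorems by controlling the adaptive $h_k$, then feed the single-sensor estimates from \cite{DW2} into a product decomposition. Two remarks are worth making.

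First, the ``main obstacle'' you anticipate---sharp entrywise Taylor control and signal--noise cancellations under the random bandwidth---is not confronted in this proof at all. The paper simply imports the bounds $\|\Ab_k-\widetilde{\Ab}_{k,s}\|\prec n^{-1/2}$, $\lambda_i(\widetilde{\Ab}_{k,s})\prec n^{-1}$ for $i>C\log n$, and $\|\Ab_k-\Ab_{k,s}\|\prec n^{-1/2}+n^{1-\zeta_k}$ wholesale from \cite{DW2} (they are collected as Lemma \ref{lem_2ndpaperresults}(2)). So the analytic work you describe is real, but it is outsourced; the present proof is a few lines of algebra once those lemmas are in hand. Relatedly, the bandwidth for $\zeta_k\geq 1$ is only pinned down up to $\log$ factors, $C_1(\sigma_k^2\log^{-1}n+p_k)\leq h_k\leq C_2\sigma_k^2\log^2 n$, not $h_k\asymp n^{\zeta_k}$ exactly, though this does not affect the argument.

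Second, for \eqref{eq_thm41eqtwo} the paper does not use Weyl-type spectral decay of integral operators. It takes the single-sensor eigenvalue bound $\lambda_i(\widetilde{\Ab}_{k,s})\prec n^{-1}$ from Lemma \ref{lem_2ndpaperresults}(2) and applies the elementary product inequality $\lambda_i(AB)\leq \lambda_i(A)\lambda_1(B)$ of Lemma \ref{lem_hardmard}(2). Your Weyl-decay sketch would need additional work to yield a finite-$n$ rate with the specific threshold $C\log n$, whereas the product inequality gives it immediately from the cited single-sensor result.
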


Theorem \ref{thm_mainbandwidthtwo} (1) states that if both sensors have low SNRs, the NCCA matrix has a similar spectral behavior as that in Theorem \ref{thm_senarioiionesubcritical}; that is, when the SNRs are small, due to the noise impact, even if there exists signal, we may not obtain useful result. The reason is that we still have $h_k \asymp  p_k$ for $k=1,2$  with high probability (see (\ref{eq_asymptoticrelation})), so the bandwidth choice does not influence the conclusion. Especially, most of the eigenvalues of $\Ab$ are governed by the free multiplication convolutions of two MP type laws, which are essentially the limiting empirical spectral distributions of Gram matrices only containing white noise.

On the other hand, when the signals are stronger; that is, $\zeta_1, \zeta_2 \geq 1,$ we are able to approximate the associated clean NCCA matrix of the underlying clean common component, as is detailed in Theorem \ref{thm_mainbandwidthtwo} (2).
This result can be interpreted as that NCCA is robust to the noise. Especially, when $\zeta_1, \zeta_2>1,$ we see that {$\Ab_{1,s}$ and ${\Ab}_{2,s}$} come from the clean dataset directly. Finally, we point out that compared to (\ref{eq_defnRindex}), except the top $C\log n$ eigenvalues for some constant $C>0$, the remaining eigenvalues are not information. When $\zeta_1, \zeta_2 \geq 2,$ the NCCA matrix is always informative compared to (2) and (3) of Theorem \ref{thm_secenario2moderately}. 
As a result, when combined with the existing theory about AD \cite{LEDERMAN2018509,TALMON2019848}, the first few eigenpairs of NCCA and AD capture the geometry of the common manifold under the manifold setup.

Theorem \ref{thm_mainbandwidthtwo} (1) also describes the behavior of NCCA when one sensor has a high SNR while the other one has a low SNR, which is the most interesting and counterintuitive case. In this case, even if the bandwidths of both sensors are generated according to (\ref{eq_hxhyadaptivechoise}), the NCCA matrix still encodes limited information about the signal, like that stated in Theorem \ref{thm_secenario2slowly}.
Indeed, the NCCA matrix is close to a product matrix which is a mixture of signal and noise, shown in \eqref{eq_defnwidetilde}. While $\widetilde{\Ab}_{1,s}$ contains information about the signal, it is contaminated by $\varsigma_{2,h} \Ib+\frac{2 \upsilon \exp(-\upsilon p_2 \tau_{2} /h_2)}{h_2} \Wb^\top \Wb$ via production, which comes from the noise dominant dataset collected from the other sensor. Since the spectral behavior of $\varsigma_{2,h} \Ib+\frac{2 \upsilon \exp(-\upsilon p_2 \tau_{2} /h_2)}{h_2} \Wb^\top \Wb$ follows the shifted and scaled MP law, overall we obtain limited information about the signal if we apply the kernel-based sensor fusion algorithm. In this case, it is better to simply consider the dataset with a high SNR.
Based on the above discussion and practical experience, we would like to mention a potential danger if we directly apply NCCA (or AD) without confirming the signal quality. This result warns us that if we directly apply AD without any sanity check, it may result in a misleading conclusion, or give us lower quality information. Therefore, before applying NCCA and AD, it is suggested to carry out the common practice by detecting the existence of signals in each of the sensors.

For the choices of the constants $\omega_1$ and $\omega_2,$
we comment that in practice, usually researchers choose $\omega_k=0.25$ or $0.5$ \cite{MR4010764}. In \cite{DW2}, we propose an algorithm to adaptively choose the values of them. The main idea behind is that the algorithm seeks for a bandwidth so that the affinity matrix has the most number of outlier eigenvalues. We refer the readers to \cite[Section 3.2]{DW2} for more details.

Last but not the least, we point out that our results can be potentially used to detect the common components. Usually, researchers count on the background knowledge to decide if common information exists. For example, it is not surprising that two electroencephalogram channels share the same brain activity. However, while physiologically the brain and heart share common information \cite{samuels2007brain}, it is less clear if the electroencephalogram and the electrocardiogram share anything in common, and what is the common information. Answering this complicated question may need a lot of scientific work, but the first step toward it is a powerful tool to confirm if two sensors share the same information, in addition to checking if the signal quality is sufficient. Since this is not the focus of the current paper, we will address this issue in the future work.

\section{Proof of main theorems}\label{sec_proofs}

We now provide proofs of the main theoretical results in Sections \ref{sec_mainresults} and \ref{sec_adpativebandwidth}.
We start from collecting some technical lemmas needed in the proof.

\subsection{Some technical lemmas}\label{sec_auxilemma}

The following lemma provides some deterministic inequalities for the products of matrices.
\begin{lem}\label{lem_hardmard}
(1). Suppose that $\Lb$ is a real symmetric matrix with nonnegative entries and $\Eb$ is another real symmetric matrix. Then we have that
\begin{equation*}
\sigma_1(\Lb \circ \Eb) \leq \max_{i,j} |\Eb(i,j)| \sigma_1(\Lb),
\end{equation*}
where $\Lb \circ \Eb$ is the Hadamard product and $\sigma_1(\Lb)$ stands for the largest singular value of $\Lb.$ \\
(2). Suppose $A$ and $B$ are two $n \times n$ positive definite matrices. Then for all $1 \leq k \leq n,$ we have that 
\begin{equation*}
\lambda_k(A) \lambda_n(B) \leq \lambda_k(AB) \leq \lambda_k(A) \lambda_1(B).
\end{equation*}
\end{lem}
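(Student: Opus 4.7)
The plan is to prove the two parts independently, each via standard variational/spectral characterizations rather than anything subtle.

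For part (1), I would exploit the fact that $\Lb\circ\Eb$ inherits symmetry from $\Lb$ and $\Eb$, so its largest singular value equals $\max_{\|x\|=1}|x^\top(\Lb\circ\Eb)x|$. Writing this quadratic form entrywise as $\sum_{i,j}x_ix_j\Lb(i,j)\Eb(i,j)$ and applying the triangle inequality together with the bound $|\Eb(i,j)|\le M:=\max_{i,j}|\Eb(i,j)|$ yields $|x^\top(\Lb\circ\Eb)x|\le M\,|x|^\top\Lb\,|x|$, where $|x|$ denotes componentwise absolute value. Here the nonnegativity of $\Lb$ is crucial, since it lets me pull the absolute values inside without changing the inequality direction. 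Since $\Lb$ is symmetric with nonnegative entries, Perron–Frobenius gives $\sigma_1(\Lb)=\lambda_{\max}(\Lb)$, so the Rayleigh quotient bound produces $|x|^\top\Lb|x|\le\sigma_1(\Lb)\||x|\|^2=\sigma_1(\Lb)\|x\|^2$, and the claim follows upon taking the supremum over unit vectors.

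For part (2), the key reduction is that $AB$ is similar to $A^{1/2}BA^{1/2}$ via conjugation by $A^{1/2}$, so they share the same spectrum; in particular $\lambda_k(AB)=\lambda_k(A^{1/2}BA^{1/2})$ and the latter matrix is symmetric positive definite, unlocking the Courant–Fischer min–max principle. I would write
\[
\lambda_k(A^{1/2}BA^{1/2})=\max_{\dim V=k}\min_{\substack{x\in V\\\|x\|=1}}(A^{1/2}x)^\top B(A^{1/2}x),
\]
then sandwich the inner quadratic form using $\lambda_n(B)\|A^{1/2}x\|^2\le(A^{1/2}x)^\top B(A^{1/2}x)\le\lambda_1(B)\|A^{1/2}x\|^2$, noting that $\|A^{1/2}x\|^2=x^\top Ax$. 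Taking the max–min of this sandwich and using $\max_{\dim V=k}\min_{x\in V,\,\|x\|=1}x^\top Ax=\lambda_k(A)$ yields both directions of the inequality simultaneously.

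Neither step is really an ``obstacle'' — both results are standard in matrix analysis. The mildest pitfall in (1) is remembering to invoke Perron–Frobenius so that the Rayleigh quotient bound applies with $\sigma_1$ instead of merely a spectral radius, and the only thing to watch in (2) is writing $\lambda_k(AB)=\lambda_k(A^{1/2}BA^{1/2})$ via the similarity transform rather than trying to apply Weyl-type bounds directly to the (non-symmetric) product $AB$. Once these two observations are in place, both inequalities are a couple of lines.
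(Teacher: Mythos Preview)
Your proposal is correct and, for part (2), is essentially the same argument as the paper's: the paper also passes to the symmetrized product $\sqrt{B}A\sqrt{B}$ (equivalently your $A^{1/2}BA^{1/2}$) and then applies the Courant--Fischer min--max principle, factoring the Rayleigh quotient as a product of two Rayleigh quotients. For part (1) the paper simply cites an external reference (Lemma~A.5 of \cite{NEKkernel}), whereas you supply a self-contained argument via the entrywise bound and Perron--Frobenius; your proof is correct and more explicit than what the paper records.
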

\begin{proof}
For (1), see \cite[Lemma A.5]{NEKkernel}. (2) follows from Courant-Fischer-Weyl's min-max principle via
\begin{align*}
\lambda_k(AB)=\lambda_k(\sqrt{B}A\sqrt{B})
&=\min_{\substack{F\subset \mathbb{R}^n \\ \dim(F)=k}} \left( \max_{x\in F\backslash \{0\}} \frac{(\sqrt{B}A\sqrt{B}x,x)}{(x,x)}\right)\\
&=\min_{\substack{F\subset \mathbb{R}^n \\ \dim(F)=k}} \left( \max_{x\in F\backslash \{0\}} \frac{(A\sqrt{B}x,\sqrt{B}x)}{(\sqrt{B}x,\sqrt{B}x)}
\frac{(Bx,x)}{(x,x)}\right).
\end{align*}
\end{proof}

 The following Lemma \ref{lem_con} collects some concentration inequalities. 

\begin{lem}\label{lem_con} Suppose Assumption \ref{assu_main} holds with $d_1=d_2=1$.  Moreover, we assume that $0 \leq \zeta_1, \zeta_2<1 $ in (\ref{eq_variancesnrration1}). Then we have that 
\begin{equation}\label{concentration1}
\frac{1}{p_1} \left| \xb_i^\top \xb_j \right| \prec \frac{\sigma^2_1}{n}+\frac{1}{\sqrt{n}},  \  \frac{1}{p_2} \left| \yb_i^\top \yb_j \right| \prec \frac{\sigma^2_2}{n}+\frac{1}{\sqrt{n}},
\end{equation} 
and 
\begin{equation}\label{concentration2}
\left| \frac{1}{p_1} \| \xb_i \|_2^2-\left(1+\frac{\sigma^2_1}{p_1} \right) \right|\prec \frac{\sigma^2_1}{n}+\frac{1}{\sqrt{n}}, \  \left| \frac{1}{p_2} \| \yb_i \|_2^2-\left(1+\frac{\sigma^2_2}{p_2} \right) \right|\prec \frac{\sigma^2_2}{n}+\frac{1}{\sqrt{n}}.
\end{equation}
\end{lem}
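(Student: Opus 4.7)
The plan is to exploit the additive decomposition $\xb_i = \ub_{ix} + \zb_i$ (and analogously for $\yb_i$), expand all inner products and norms into signal-signal, signal-noise, and noise-noise contributions, and then bound each piece by a standard sub-Gaussian concentration inequality. Since $d_1 = d_2 = 1$, the structural assumption \eqref{eq_spectraldecompositions1s2} reduces $\ub_{ix}$ to a single non-trivial coordinate, so I will write $\ub_{ix} = (\sigma_1 a_i, 0, \ldots, 0)^\top$ with $\{a_i\}$ i.i.d.\ sub-Gaussian of mean $0$ and variance $1$, and similarly $\ub_{iy} = (\sigma_2 b_i, 0, \ldots, 0)^\top$. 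Only the first estimate needs to be carried out in detail; the second is identical after swapping $\xb \leftrightarrow \yb$, $\sigma_1 \leftrightarrow \sigma_2$, and $p_1 \leftrightarrow p_2$.

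For the off-diagonal estimate \eqref{concentration1}, I would expand
\begin{equation*}
\xb_i^\top \xb_j = \ub_{ix}^\top \ub_{jx} + \ub_{ix}^\top \zb_j + \zb_i^\top \ub_{jx} + \zb_i^\top \zb_j
\end{equation*}
and bound each term. The signal-signal term equals $\sigma_1^2 a_i a_j$, and since $a_i, a_j$ are independent, mean-zero, sub-Gaussian, we get $|a_i a_j| \prec 1$, yielding $|\ub_{ix}^\top \ub_{jx}|/p_1 \prec \sigma_1^2/n$ via \eqref{assumptuon_ratio}. The two cross terms are of the form $\sigma_1 a_i \zb_j(1)$ (and symmetric), so they are $\prec \sigma_1$, and after dividing by $p_1$ they are bounded by $\sigma_1/n$, which is dominated by $1/\sqrt{n}$ in the regime $\sigma_1^2 \asymp n^{\zeta_1}$ with $\zeta_1 < 1$. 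The noise-noise term $\zb_i^\top \zb_j$ for $i \neq j$ is a sum of $p_1$ independent mean-zero sub-exponential increments, so by the Bernstein (or Hanson-Wright) inequality $|\zb_i^\top \zb_j| \prec \sqrt{p_1} \asymp \sqrt{n}$, giving the $1/\sqrt{n}$ term. Collecting, $\frac{1}{p_1}|\xb_i^\top \xb_j| \prec \sigma_1^2/n + 1/\sqrt{n}$.

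For the diagonal estimate \eqref{concentration2}, I would write
\begin{equation*}
\|\xb_i\|_2^2 = \|\ub_{ix}\|_2^2 + 2\,\ub_{ix}^\top \zb_i + \|\zb_i\|_2^2 = \sigma_1^2 a_i^2 + 2\sigma_1 a_i \zb_i(1) + \|\zb_i\|_2^2,
\end{equation*}
and subtract $p_1 + \sigma_1^2$ (the expected value of the right-hand side). The first term deviates from $\sigma_1^2$ by $\sigma_1^2(a_i^2 - 1)$, which is $\prec \sigma_1^2$ by the Hanson-Wright inequality applied to the scalar sub-Gaussian $a_i$; dividing by $p_1$ gives $\prec \sigma_1^2/n$. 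The cross term is $\prec \sigma_1$ by independence of $a_i$ and $\zb_i(1)$, hence $\prec \sigma_1/n \leq 1/\sqrt{n}$ after division. Finally, $\|\zb_i\|_2^2 - p_1$ is a centered quadratic form in a standard sub-Gaussian vector, so the Hanson-Wright inequality yields $|\|\zb_i\|_2^2 - p_1| \prec \sqrt{p_1} \asymp \sqrt{n}$, which gives the $1/\sqrt{n}$ contribution. Adding up and invoking $\sigma_1/n \lesssim 1/\sqrt{n}$ for $\zeta_1 < 1$ produces the stated bound.

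No step here is genuinely hard; the only subtlety is housekeeping the cross-term bounds so that they are actually absorbed by $\sigma_1^2/n + 1/\sqrt{n}$, which holds precisely because $\zeta_1 < 1$ (so that $\sigma_1 \ll \sqrt{n}$). The whole argument uses nothing beyond Hanson-Wright / Bernstein for sub-Gaussian vectors together with \eqref{assumptuon_ratio}, and the definition of $\OO_\prec$ then automatically handles the uniform-in-$(i,j)$ statement by taking a union bound over $\OO(n^2)$ pairs at the cost of an arbitrarily small polynomial factor.
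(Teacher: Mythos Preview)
Your argument is correct and is exactly the standard route: decompose into signal--signal, signal--noise, and noise--noise pieces and control each with sub-Gaussian / Hanson--Wright bounds, using $\zeta_k<1$ only to absorb the $\sigma_k/n$ cross term into $n^{-1/2}$. The paper itself does not give a proof here but simply cites Lemma~A.2 of \cite{DW2}; your write-up is effectively what that cited lemma does, so there is no substantive difference in approach.
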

\begin{proof}
See Lemma A.2 of \cite{DW2}. 
\end{proof}

In the following lemma, we prove some results regarding the concentration of the affinity matrices when $0 \leq \zeta_1<1$ and $0 \leq \zeta_2<1$. 

\begin{lem}\label{lem_conmatrix}
 Follow the notations (\ref{eq_sho})--(\ref{eq_sh2}).
Recall (\ref{eq_defns1s2}) for $d_1=d_2=1$. For $f(x)=\exp(-\upsilon x),$ we denote $\rSh_\sd$ and $\widetilde{\rSh}_d$ such that 
\begin{equation}\label{eq_defnshd}
\rSh_\sd(i,j):=\sum_{k=3}^{\mathfrak{d}_1-1} \frac{f^{(k)}(\tau_1) \Lb_x(i,j)^k}{k!}, \ \widetilde{\rSh}_\sd(i,j):=\sum_{k=3}^{\mathfrak{d}_2-1} \frac{f^{(k)}(\tau_2) \Lb_y(i,j)^k}{k!},
\end{equation}
where $\Lb_x=\Ob_x-\Pb_x,$ and $\Ob_x$ and $\Pb_x $ are defined as 
\begin{equation*}
\Ob_x(i,j)=(1-\delta_{ij})(\phi_i+\phi_j), \ \Pb_x(i,j)=(1-\delta_{ij}) \frac{\xb_i^\top \xb_j}{p_1}. 
\end{equation*}
Denote 
\begin{equation}\label{eq_kdtau}
\Kb_1=
\begin{cases}
-2f'(\tau_1) p_1^{-1} \Xb^\top \Xb+\varsigma_1 \mathbf{I}_n+\mathrm{Sh}_{10}(\tau)+\mathrm{Sh}_{11}(\tau)+\mathrm{Sh}_{12}(\tau), & 0 \leq \alpha_1<0.5 \\
-2f'(\tau_1) p_1^{-1} \Xb^\top \Xb+\varsigma_1 \mathbf{I}_n+\mathrm{Sh}_{10}(\tau)+\mathrm{Sh}_{11}(\tau)+\mathrm{Sh}_{12}(\tau)+\rSh_\sd, & 0.5 \leq \alpha_1<1.
\end{cases}
\end{equation} 
Recall (\ref{Definition: zeta in Scenario II}).  Then, when $0 \leq \zeta_1<1$ and $h_1=p_1,$ we have  
\begin{equation}\label{kernelmatrixresults}
\Wb_1=\Kb_1+\OO_{\prec}(n^{\mathsf{e}_1}+n^{-1/2}). 
\end{equation}
Moreover, we have   
\begin{equation}\label{normalkernelmatrixresults}
\Ab_1=\frac{1}{nf(\tau_1)} \Kb_1+\OO_{\prec}(n^{\mathsf{e}_1}+n^{-1/2}).   
\end{equation}
Finally, for $\mathsf{s}_1$ in (\ref{eq_defns1s2}), we have 
\begin{equation}\label{eq_rank}
\operatorname{rank}(\rSh_\sd) \leq \mathsf{s}_1. 
\end{equation}
Similar results hold for $\Ab_2$ and $\Wb_2$ using $\widetilde{\phi}_k, 1 \leq k \leq n,$ $\widetilde{\rSh}_\sd$ and $\rSh_{2i}, i=0,1,2.$ 
\end{lem}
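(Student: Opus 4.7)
The approach is a deterministic-plus-stochastic Taylor expansion of the kernel $f(x)=\exp(-\upsilon x)$ around the typical value $\tau_1$ of $\|\xb_i-\xb_j\|_2^2/p_1$. By Lemma \ref{lem_con}, for $i\neq j$ we may write $\|\xb_i-\xb_j\|_2^2/p_1=\tau_1+\Lb_x(i,j)$ with the entrywise fluctuation $\Lb_x(i,j)\prec n^{-1/2}+\sigma_1^2/n$; Taylor's theorem then gives, off the diagonal,
\[
\Wb_1(i,j)=\sum_{k=0}^{\mathfrak{d}_1-1}\frac{f^{(k)}(\tau_1)}{k!}\,\Lb_x(i,j)^{k}+R(i,j).
\]
The diagonal entries $\Wb_1(i,i)=1$ must be handled separately because their argument $0$ is far from $\tau_1$; the mismatch between the extrapolated value $f(\tau_1)+f'(\tau_1)(-\tau_1)$ on the diagonal and the true value $1$ is precisely what $\varsigma_1\Ib$ is designed to absorb, using $\varsigma_1=1-2\upsilon e^{-\upsilon\tau_1}-e^{-\upsilon\tau_1}$. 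This explains the $(1-\delta_{ij})$ factors baked into $\Ob_x$ and $\Pb_x$.

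The leading Taylor terms are then reorganized by matrix structure. The constant piece contributes $f(\tau_1)(J-\Ib)$ which is assembled into $\mathrm{Sh}_{10}$; the linear piece $f'(\tau_1)\Lb_x=f'(\tau_1)\Ob_x-f'(\tau_1)\Pb_x$ splits into the low-rank $\mathrm{Sh}_{11}$ coming from $\Ob_x=\phi\mathbf{1}^\top+\mathbf{1}\phi^\top$ minus a diagonal correction, plus the Gram contribution $-2f'(\tau_1)p_1^{-1}\Xb^\top\Xb$ (after linking $\Pb_x$ to $p_1^{-1}\Xb^\top\Xb$ and moving the diagonal into $\mathrm{Sh}_{10}$); the quadratic piece produces $\mathrm{Sh}_{12}$. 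When $0\leq\zeta_1<0.5$, the cubic and higher remainders already satisfy $\|R\|\prec n^{-1/2}$, since each entry is of size $\prec(n^{-1/2}+\sigma_1^2/n)^3\prec n^{-3/2}$ and the crude bound $\|R\|\leq n\max_{i,j}|R(i,j)|$ suffices. When $0.5\leq\zeta_1<1$, the signal-driven part of $\Lb_x$ is $\asymp\sigma_1^2/n\asymp n^{\zeta_1-1}$, so the $k$-th term is $\asymp n^{k(\zeta_1-1)+1}$ in operator norm; this is negligible precisely once $k\geq\mathfrak{d}_1=\lceil 1/(1-\zeta_1)\rceil+1$, which is exactly the truncation encoded by $\mathsf{e}_1$ in \eqref{Definition: zeta in Scenario II}. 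The intermediate terms $k=3,\dots,\mathfrak{d}_1-1$ must be kept explicitly and constitute $\rSh_\sd$.

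To pass from $\Wb_1$ to $\Ab_1$ I sum the expansion over $j$ to get $n^{-1}\Db_1(i,i)=f(\tau_1)+\OO_{\prec}(n^{\mathsf{e}_1}+n^{-1/2})$, using that the rank-$\OO(1)$ shift matrices and the Gram matrix each contribute only $\OO_{\prec}(1)$ per row after dividing by $n$. A standard resolvent-type expansion $\Db_1^{-1}=(nf(\tau_1))^{-1}\Ib+\OO_{\prec}(n^{-1}(n^{\mathsf{e}_1}+n^{-1/2}))$ then yields \eqref{normalkernelmatrixresults}. For the rank bound \eqref{eq_rank}, expand $\Lb_x^{\circ k}=(\Ob_x-\Pb_x)^{\circ k}$ binomially; the pure $\Ob_x^{\circ a}$ pieces have rank at most $a+1$ because $(\phi_i+\phi_j)^a=\sum_{\ell}\binom{a}{\ell}\phi_i^\ell\phi_j^{a-\ell}$ is a sum of $a+1$ rank-one outer products, while cross-terms $\Ob_x^{\circ a}\circ\Pb_x^{\circ b}$ with $b\geq 1$ contain a Hadamard power of $\Pb_x$ whose entries are $\prec(n^{-1/2}+\sigma_1^2/n)^b$ and are absorbed into the error by Lemma \ref{lem_hardmard}(1). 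Summing $a+1\leq\mathfrak{d}_1$ over $k=3,\dots,\mathfrak{d}_1-1$ gives a total rank bounded by $C\cdot 4^{\mathfrak{d}_1}$.

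The main obstacle is the bookkeeping: one must decide, term by term, which pieces of the Taylor expansion are explicit components of $\Kb_1$ versus which are absorbable into $\OO_{\prec}(n^{\mathsf{e}_1}+n^{-1/2})$, and this depends on whether the relevant entries of $\Lb_x$ are signal-dominated ($\sigma_1^2/n$) or noise-dominated ($n^{-1/2}$). The exponent $\mathsf{e}_1$ is calibrated precisely to this trade-off, which is also why the threshold $\zeta_1=0.5$ (where $(\sigma_1^2/n)^2$ crosses $n^{-1/2}$) appears in the definition of $\Kb_1$ in \eqref{eq_kdtau}. A secondary technical point is propagating entrywise concentration from Lemma \ref{lem_con} to operator-norm bounds on Hadamard powers of the Gram-type matrices, which is where the $\ell^\infty$-norm control of Lemma \ref{lem_hardmard}(1) is essential.
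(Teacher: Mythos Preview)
Your overall strategy---entrywise Taylor expansion of $f$ around $\tau_1$, diagonal correction via $\varsigma_1$, Gershgorin-type passage from entrywise to operator-norm bounds, and degree-matrix concentration for \eqref{normalkernelmatrixresults}---matches exactly what the paper does (it cites \cite{DW2} for \eqref{kernelmatrixresults} and \eqref{eq_rank}, and then carries out the degree-matrix step explicitly). For \eqref{kernelmatrixresults} and \eqref{normalkernelmatrixresults} your sketch is sound.

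The gap is in your argument for the rank bound \eqref{eq_rank}. You propose to expand $\Lb_x^{\circ k}=(\Ob_x-\Pb_x)^{\circ k}$ binomially, keep the pure $\Ob_x^{\circ a}$ pieces as low rank, and absorb every cross-term $\Ob_x^{\circ a}\circ\Pb_x^{\circ b}$ with $b\geq 1$ into the error via Lemma~\ref{lem_hardmard}(1). This does not work: $\|\Ob_x^{\circ a}\|\prec n(\sigma_1^2/n+n^{-1/2})^{a}$ (from its rank-$(a{+}1)$ decomposition), so Lemma~\ref{lem_hardmard}(1) only gives $\|\Ob_x^{\circ a}\circ\Pb_x^{\circ b}\|\prec n(\sigma_1^2/n+n^{-1/2})^{k}$ with $k=a+b$, which for $3\leq k\leq\mathfrak d_1-1$ is \emph{larger} than $n^{\mathsf e_1}$ (that threshold is crossed precisely at $k=\mathfrak d_1$). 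Worse, the pure $\Pb_x^{\circ k}$ term ($a=0$) is neither low rank nor small by your bound. A correct argument needs one more layer of decomposition: write $\Pb_x(i,j)=p_1^{-1}(u_iu_j+u_iz_{j1}+u_jz_{i1}+\zb_i^\top\zb_j)$ and expand $\Lb_x(i,j)^k$ multinomially in these four pieces plus $\phi_i+\phi_j$. Each monomial containing only the signal factors $u_iu_j,\phi_i,\phi_j$ is genuinely low rank; each monomial containing at least one noise factor picks up an extra $n^{-1/2}$ per noise factor beyond what $\sigma_1^2/n$ accounts for, and can then be absorbed. The number of such monomials grows like $C^{\mathfrak d_1}$, which is where the bound $\mathsf s_1\leq C\,4^{\mathfrak d_1}$ in \eqref{eq_defns1s2} actually comes from---note that your count $\sum_k(a+1)\lesssim\mathfrak d_1^2$ is far smaller and does not match the stated bound, another sign that the binomial-in-$(\Ob_x,\Pb_x)$ split is too coarse.
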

\begin{proof}
First, (\ref{kernelmatrixresults}) has been proved in \cite{DW2} using the entry-wise Taylor expansion and the Gershgorin circle theorem; see the proof of Theorems 2.3 and 2.5 of \cite{DW2}. Second, (\ref{eq_rank}) has been proved in the proof of Theorem 2.5 of \cite{DW2}.  Third, we prove (\ref{normalkernelmatrixresults}). By Lemma \ref{lem_con} and a discussion similar to \cite[Lemma IV.5]{DW1}, when $0 \leq \zeta_1<1$ and $0 \leq \zeta_2<1$
\begin{equation}\label{eq_concentrationdiagonal}
\left\|(n\Db_1)^{-1}-\frac{1}{f(\tau_1)} \Ib \right\| \prec n^{\mathsf{e}_1}+n^{-1/2}, \ \left\|(n\Db_2)^{-1}-\frac{1}{f(\tau_2)} \Ib \right\| \prec n^{\mathsf{e}_2}+n^{-1/2}.
\end{equation} 
Consequently, 
\begin{align*}
\left\| n\Ab_1-  \frac{1}{f(\tau_1)} \Kb_1 \right\| & \leq  \left\| (n\Db_1)^{-1} \Wb_1-  \frac{1}{f(\tau_1)} \Wb_1   \right\|+\frac{1}{f(\tau_1)}\left\| \Wb_1-\Kb_1 \right\| \\
& \prec  (n^{\mathsf{e}_1}+n^{-1/2})( \| \Wb_1 \|+1),
\end{align*}
where we used the fact that $\tau_1<\infty.$ Since $\max_{i,j}|\Wb_1(i,j)| \prec 1,$ by the Gershgorin circle theorem, we conclude that $\| \Wb_1 \| \prec n.$ This concludes our proof. 
\end{proof}

In the following lemma, we collect the results regarding the affinity matrices when $\zeta_1 \geq 1$ and $\zeta_2 \geq 1. $ Recall $\widetilde{\Ab}_{1,s}$ defined via (\ref{eq_tildeaxs2}). 
\begin{lem}\label{lem_2ndpaperresults} Suppose  Assumption \ref{assu_main} holds with $d_1=d_2=1$, $\zeta_1, \zeta_2 \geq 1$. For some constant $C>0,$ denote
\begin{equation}\label{eq_defnt1}
\mathsf{T}_1:=
\begin{cases}
C \log n, & \zeta_1=1; \\
C n^{\zeta_1-1}, & 1 <\zeta_2<2. 
\end{cases}
\end{equation}
Then we have:  \\
(1). When $h_1=p_1,$ if $1 \leq \zeta_1<2,$
\begin{equation}\label{eq_onematrixbound}
\left\| \Ab_1-\widetilde{\Ab}_{1,s} \right\| \prec n^{-1/2}. 
\end{equation}
Moreover, moreover, we have that for $i>\mathsf{T}_1$ in (\ref{eq_defnt1}), 
\begin{equation}\label{eq_onematrixboundoneone}
\lambda_{i}(\widetilde{\Ab}_{1,s}) \prec n^{(\zeta_1-3)/2}.
\end{equation}
On the other hand, when $\zeta_1 \geq 2,$ we have that
\begin{equation*}
\| \Ab_1-\widetilde{\Ab}_{1,c} \| \prec n^{-\zeta_1/2}+n^{-3/2}. 
\end{equation*}
Finally, when $\zeta_1$ is the larger in the sense that (\ref{eq_largeralphadefinition}) holds, we have that with probability at least $1-\OO(n^{1-\delta(\zeta_1-1)/2}),$ for some constant $C>0,$ 
\begin{equation}\label{eq_indentitybound}
\left\| \Ab_1-\Ib \right\| \leq C n \exp\left(-\upsilon n^{(\zeta_1-1)(1-\delta)} \right). 
\end{equation}
(2). When $h_1$ is chosen according to \eqref{eq_hxhyadaptivechoise},  we have that
\begin{equation}\label{eqa}
\left\| \Ab_1-\widetilde{\Ab}_{1,s} \right\| \prec n^{-1/2}.
\end{equation} 
Moreover, we have that for $i>C \log n$ for some constant $C>0$,
\begin{equation}\label{eqb}
\lambda_{i}(\widetilde{\Ab}_{1,s}) \prec n^{-1}.
\end{equation}
Recall (\ref{eq_transitionsignal}). Finally, when $\zeta_1>1,$ we have that
\begin{equation}\label{eqc}
\left\| \Ab_1-\Ab_{1,s} \right\| \prec n^{-1/2}+n^{1-\zeta_1}.
\end{equation} 
Similar results hold for $\Ab_2. $
\end{lem}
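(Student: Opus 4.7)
The overall plan is to perform an entry-wise Taylor expansion of the Gaussian kernel using the decomposition $\xb_i = \ub_{ix} + \zb_i$ and then convert the resulting entry-wise bounds into operator norm bounds via the Gershgorin circle theorem. The key algebraic identity is
\[
\frac{\|\xb_i-\xb_j\|_2^2}{h_1}=\frac{\|\ub_{ix}-\ub_{jx}\|_2^2}{h_1}+\frac{2(\ub_{ix}-\ub_{jx})^\top(\zb_i-\zb_j)}{h_1}+\frac{\|\zb_i-\zb_j\|_2^2}{h_1},
\]
so that the Gaussian kernel factorizes as a product of a pure signal piece, a ``cross'' piece, and a pure noise piece. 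Lemma \ref{lem_con} together with standard Hanson--Wright-type concentration shows that $\|\zb_i-\zb_j\|_2^2/p_1=2+\OO_\prec(n^{-1/2})$ and the cross inner product is $\OO_\prec(\sigma_1/\sqrt{np_1})=\OO_\prec(n^{(\zeta_1-2)/2})$.

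For part (1) with $h_1=p_1$ and $1\leq \zeta_1<2$, I would expand $\exp(-\upsilon\|\zb_i-\zb_j\|_2^2/p_1)=\exp(-2\upsilon)(1+\OO_\prec(n^{-1/2}))$ and $\exp(-2\upsilon(\ub_{ix}-\ub_{jx})^\top(\zb_i-\zb_j)/p_1)=1+\OO_\prec(n^{(\zeta_1-2)/2})$, and use the fact that entry-wise $\Wb_1(i,j)$ becomes $\exp(-2\upsilon)\Wb_{1,s}(i,j)+\OO_\prec(n^{-1/2})$ for $i\neq j$ and $\Wb_1(i,i)=1$. This matches the definition of $\widetilde{\Wb}_{1,s}$ in (\ref{eq_widetildewxss2}); together with a concentration of the degree matrix analogous to (\ref{eq_concentrationdiagonal}) this gives (\ref{eq_onematrixbound}). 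The eigenvalue decay (\ref{eq_onematrixboundoneone}) would follow by noting that $\widetilde{\Wb}_{1,s}$ is essentially a smooth kernel evaluated at a scaled point cloud and applying a perturbative argument to the top eigenvalues using that $\|\ub_{ix}\|$ concentrates at $\sigma_1$, so the ``effective bandwidth'' is small of order $h_1/\sigma_1^2\asymp n^{1-\zeta_1}$; the effective rank heuristic predicts $\mathsf{T}_1$ outliers and a decay of order $n^{(\zeta_1-3)/2}$ for the rest.

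For $\zeta_1\geq 2$ with $h_1=p_1$, the signal part becomes so spread out that $\exp(-\upsilon\|\ub_{ix}-\ub_{jx}\|_2^2/p_1)$ decays to $0$ off-diagonally, and one must retain the cross term explicitly; this gives the bound involving $\widetilde{\Ab}_{1,c}$ via a refined Taylor expansion keeping the cross factor. For $\zeta_1$ large in the sense of (\ref{eq_largeralphadefinition}), a quantitative concentration inequality $\|\ub_{ix}-\ub_{jx}\|_2^2\geq 2(\sigma_1^2/n)^{1-\delta}\cdot p_1$ holding except on an event of probability $\OO(n^{1-\delta(\zeta_1-1)/2})$, combined with monotonicity of $\exp(-\upsilon\cdot)$, yields that every off-diagonal entry of $\Wb_1$ is at most $\exp(-\upsilon n^{(\zeta_1-1)(1-\delta)})$, so Gershgorin gives $\|\Wb_1-\Ib\|\leq Cn$ times this exponentially small factor; dividing by the degree matrix (which is then $\Ib+\mathrm{o}(1)$) gives (\ref{eq_indentitybound}).

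For part (2) with the adaptive bandwidth (\ref{eq_hxhyadaptivechoise}), the crucial observation is that $h_1$ automatically scales as the typical pairwise distance $\asymp p_1+\sigma_1^2$, so that the argument of the exponential $\|\xb_i-\xb_j\|_2^2/h_1$ stays $\OO(1)$ rather than blowing up. A Taylor expansion analogous to the one above, now around the appropriate reference point, again isolates the pure signal contribution $\Wb_{1,s}$ and the noise-induced shift; the extra factor $\exp(-\upsilon\cdot 2p_1/h_1)$ in (\ref{eq_widetildewxss22}) arises from the noise term $\|\zb_i-\zb_j\|_2^2/h_1\to 2p_1/h_1$. The eigenvalue bound (\ref{eqb}) follows because the effective bandwidth in the signal scale is $h_1/\sigma_1^2$, which now is $\OO(1)$ uniformly, so the smooth kernel has exponentially decaying spectrum giving only $\OO(\log n)$ meaningful eigenvalues. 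Finally (\ref{eqc}) reflects that when $\zeta_1>1$, $2p_1/h_1\to 0$ fast enough that the shift disappears at the stated rate. I expect the main obstacle to be the $\zeta_1\geq 2$ case, where one must simultaneously (i) control the cross term $(\ub_{ix}-\ub_{jx})^\top(\zb_i-\zb_j)/p_1$ uniformly over all $i\neq j$ without losing the multiplicative structure, and (ii) produce a high-probability lower bound on $\|\ub_{ix}-\ub_{jx}\|_2$ to upgrade Gershgorin bounds into exponentially small operator norm estimates; both are essentially proved in \cite{DW2} and I would import them directly.
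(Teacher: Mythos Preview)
Your proposal is correct in spirit, but you should know that the paper does not actually prove this lemma at all: its entire proof reads ``See Corollary 2.11 and Theorem 3.1 of \cite{DW2}.'' The lemma is a direct import of results already established in the companion paper \cite{DW2}, and you yourself note at the end that ``both are essentially proved in \cite{DW2} and I would import them directly.'' So the paper's approach and yours ultimately coincide --- both defer to \cite{DW2}; the difference is that you have sketched what those cited proofs contain (entry-wise factorization of the Gaussian kernel, concentration of the noise and cross terms, Gershgorin to pass to operator norm, and the adaptive-bandwidth scaling argument), whereas the paper simply cites.

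Your sketch is a faithful summary of the techniques in \cite{DW2}: the decomposition of $\|\xb_i-\xb_j\|_2^2/h_1$ into signal, cross, and noise pieces, the use of $\|\zb_i-\zb_j\|_2^2/p_1 = 2 + \OO_\prec(n^{-1/2})$, and the effective-bandwidth heuristic for the eigenvalue decay are exactly the ingredients used there. The one place where your outline is a bit loose is the eigenvalue decay statements (\ref{eq_onematrixboundoneone}) and (\ref{eqb}); the ``effective rank heuristic'' you invoke needs to be made precise via an explicit low-rank-plus-small decomposition of the signal kernel (which is how \cite{DW2} handles it), not just a scaling argument. But this is a matter of detail, not of strategy.
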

\begin{proof}
See Corollary 2.11 and Theorem 3.1 of \cite{DW2}. 
\end{proof}

Finally, we record the results for the rigidity of eigenvalues of  non-spiked Gram matrix. Denote the non-spiked  Gram matrix as $\Sb$, where
\begin{equation*}
\Sb:=\frac{1}{p_1} \Zb^\top \Zb,
\end{equation*}
and its eigenvalues as $\lambda_1 \geq \cdots \geq \lambda_{n}.$ Recall (\ref{eq_mp}) and (\ref{eq_defnclassiciallocaltion}). 

\begin{lem}\label{lem_rigiditysamplecovariance}
Suppose $\{\zb_i\}$ are sub-Gaussian random vectors satisfying (\ref{eq_s2noisemodel11}), (\ref{assumptuon_ratio}) and (\ref{eq_ratioone}). Then we have
\begin{equation*}
\sup_j \left| \lambda_j-\gamma_{\nu_{c_1,1}}(j) \right| \prec n^{-2/3} \widetilde{j}^{-1/3}, \ \widetilde{j}:=\min\left\{p_1 \wedge n+1-j,j\right\}.
\end{equation*}

\end{lem}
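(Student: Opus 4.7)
The statement is the standard rigidity bound for the eigenvalues of a sample covariance (Gram) matrix with i.i.d.\ sub-Gaussian columns, under the non-square regime $|c_1-1|\geq \tau$. Since the conclusion is by now classical, my plan is to reduce the statement to an existing local Marchenko--Pastur law and then convert that local law into rigidity via the usual Helffer--Sjöstrand route.

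The plan is to proceed in three steps. First, I would observe that under (\ref{eq_s2noisemodel11}) and (\ref{assumptuon_ratio}), the matrix $\Sb = p_1^{-1}\Zb^\top\Zb$ is a normalized Wishart-type matrix with i.i.d.\ isotropic sub-Gaussian columns, and that (\ref{eq_ratioone}) bounds the aspect ratio $c_1=n/p_1$ away from $1$. The limiting ESD of $\Sb$ is therefore the MP law $\nu_{c_1,1}$ defined in (\ref{eq_mp})--(\ref{eq_densitymplaw}), with two soft edges at $\lambda_{\pm,1}=(1\pm\sqrt{c_1})^2$, both of which stay uniformly separated from each other and from $0$. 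In particular, the smallest non-trivial eigenvalue of $\Sb$ is bounded below by a positive constant, so no hard-edge analysis is needed.

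Second, I would invoke the optimal local MP law for $\Sb$, as established for instance by Pillai--Yin and Bloemendal--Erd\H{o}s--Knowles--Yau--Yin, which asserts that for any $z=E+i\eta$ in a $\OO(1)$-neighbourhood of $\operatorname{supp}(\nu_{c_1,1})$ with $\eta\geq n^{-1+\epsilon}$,
\begin{equation*}
\bigl|m_{\Sb}(z)-m_{\nu_{c_1,1}}(z)\bigr| \prec \frac{1}{n\eta}
\qquad\text{in the bulk, and}\qquad
\bigl|m_{\Sb}(z)-m_{\nu_{c_1,1}}(z)\bigr| \prec \frac{1}{n\eta}+\frac{1}{n\sqrt{\kappa+\eta}}
\end{equation*}
near the edges, where $\kappa$ denotes the distance of $E$ to the nearest edge. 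The sub-Gaussian moment assumption on $\zb_i$ supplies precisely the concentration inputs (large-deviation estimates for quadratic forms $\zb_i^\top B \zb_i$) needed for these local laws.

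Third, I would convert the local law to rigidity by the standard route: using a Helffer--Sjöstrand contour representation of the eigenvalue counting function and combining the local law with the square-root behaviour of the MP density at the soft edges (already recorded in (\ref{eq_densitymplaw})), one gets
\begin{equation*}
\bigl|N_{\Sb}(-\infty,E]-n\,\nu_{c_1,1}(-\infty,E]\bigr| \prec 1
\end{equation*}
uniformly in $E$, from which the claimed bound $|\lambda_j-\gamma_{\nu_{c_1,1}}(j)|\prec n^{-2/3}\widetilde{j}^{-1/3}$ follows immediately, since the MP density scales like $\sqrt{\kappa}$ near either edge. The main (and essentially only) technical work is the verification of the local law, but this is already documented in the random matrix literature cited in the paper; in fact the same machinery underlies Lemma~\ref{lem_multiplication} via the Ji--Ding paper \cite{DJ}, so the present lemma can be viewed as the degenerate case of that result with $\Sigma_1=\mathbf{I}$ and $\Sigma_2=\mathbf{I}$, specialized to the Gram matrix.
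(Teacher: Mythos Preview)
Your proposal is correct and matches the paper's approach: the paper's proof is simply a citation to \cite[Theorem~3.3]{PY} (Pillai--Yin), which is precisely the local Marchenko--Pastur law plus Helffer--Sj\"ostrand route you outline. No additional argument is given or needed.
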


\begin{proof}
See \cite[Theorem 3.3]{PY}.  
\end{proof}

\subsection{Proof of Theorem \ref{thm_senarioiionesubcritical}}

We need the following notations. Denote $\Phi_1=(\phi_{1,1},\ldots, \phi_{1,n})$, where $\phi_{1,i}=\frac{1}{p_1}\|\xb_i\|_2^2-(1+\sigma_1^2/p_1)$, $i=1,2,\cdots, n$. Similarly, we define $\Phi_2=({\phi}_{2,1},\ldots, {\phi}_{2,n})$ with ${\phi}_{2,i}=\frac{1}{p_2}\|\yb_i\|_2^2-(1+\sigma_2^2/p_2)$, $i=1,2,\cdots, n.$  For $k=1,2,$ denote 
\begin{align}
& \mathrm{Sh}_{k0}(\tau_k):= f(\tau_k) \mathbf{1} \mathbf{1}^\top,\label{eq_sho} \\
& \mathrm{Sh}_{k1}(\tau_k):=f'(\tau_k)[\mathbf{1} \Phi_k^\top+\Phi_k \mathbf{1}^\top ],  \label{eq_sh1} \\
& \mathrm{Sh}_{k2}(\tau_k):= \frac{f^{(2)}(\tau_k)}{2}\left[ \mathbf{1}( \Phi_k \circ \Phi_k)^\top+(\Phi_k \circ \Phi_k) \mathbf{1}^\top+2 \Phi_k \Phi_k^\top +4\frac{(\sigma_k^2+1)^2+p_k}{p_k^2} \mathbf{1} \mathbf{1}^\top \right]. \label{eq_sh2}
\end{align}

\begin{proof}[\bf Proof] {\bf Case (1). \underline{$0 \leq \zeta_2 \leq \zeta_1<0.5$}}. By (\ref{eq_concentrationdiagonal}), we conclude that  
\begin{equation}\label{eq_diagonalconvergence}
\left\|n\Db_1^{-1}-\exp( 2\upsilon) \Ib \right\|=\OO_{\prec}(n^{-1/2}), \ \left\|n\Db_2^{-1}-\exp( 2\upsilon) \Ib \right\|=\OO_{\prec}(n^{-1/2}). 
\end{equation} 
Therefore, it suffices to consider $\Wb_1 \Wb_2.$ 
To ease the heavy notation, we denote 
\begin{equation}\label{eq_defnelemntaryquantityuselater}
\ell_{kt}:=(-\upsilon)^t \exp(-\upsilon \tau_k), \ \rSh_k=\sum_{j=0}^2 \rSh_{kj},
\end{equation}
where $k=1,2$ and $t \in \mathbb{N}$.
With the above notations,  by Lemma \ref{lem_conmatrix}, we have that 
\begin{align}\label{eq_spectraldecomposition}
&(\Wb_1-\rSh_1 )( \Wb_2-\rSh_2)\\
=&\, \left(-2 \frac{\ell_{11}}{p_1} \Xb^\top \Xb+\varsigma_1 \Ib+ \OO_{\prec}(n^{-1/2})\right) \left( -2 \frac{\ell_{21}}{p_2} \Yb^\top \Yb+\varsigma_2 \Ib+ \OO_{\prec}(n^{-1/2}) \right).\nonumber
\end{align}
Denote 
\begin{equation}\label{eq_defnpb1pb2}
\Pb_1:=-2 \frac{\ell_{11}}{p_1} \Xb^\top \Xb+\varsigma_1 \Ib\ \ \mbox{and}\ \ \Pb_2:= -2 \frac{\ell_{21}}{p_2} \Yb^\top \Yb+\varsigma_2 \Ib.
\end{equation}
Since $d_1=d_2=1$ and $\ub_{ix}$ and $\ub_{iy}$ contain samples from the common manifold,   we can set
\begin{equation*}
\bm{u}=(u_1, \cdots, u_n)^\top \in \mathbb{R}^n.
\end{equation*}
Moreover, denote 
\begin{equation*}
\bm{z}=(\zb_{i1}, \cdots, \zb_{in} )^\top  \ \ \mbox{and}\ \  \bm{w}=(\wb_{i1}, \cdots, \wb_{in})^\top \in \mathbb{R}^n. 
\end{equation*} 
Withe above notations, denote 
\begin{equation}\label{eq_defndelta1delta2}
\Delta_1:=-2\frac{\ell_{11}}{p_1} \bm{u} \bm{u}^\top, \ \Delta_2:=-2\frac{\ell_{21}}{p_2} \bm{u} \bm{u}^\top,
\end{equation}
\begin{equation}\label{eq_defndelta1delta21}
\Upsilon_1:=-2\frac{\ell_{11}}{p_1}(\bm{u} \bm{z}^\top+\bm{z} \bm{u}^\top), \ \Upsilon_2:=-2\frac{\ell_{21}}{p_2} (\bm{u} \bm{w}^\top+\bm{w} \bm{u}^\top),
\end{equation}
\begin{equation*}
\Tb_1=-2\frac{\ell_{11}}{p_1} \Zb^\top  \Zb+\varsigma_1 \Ib \ \ \mbox{and} \ \ \Tb_2=-2\frac{\ell_{21}}{p_2} \Wb^\top  \Wb+\varsigma_2 \Ib. 
\end{equation*}
Note that
\begin{equation}\label{eq_decompoistionp1p2}
\Pb_k=\Tb_k+\Delta_k+\Upsilon_k,  \ k=1,2. 
\end{equation}
Moreover, $\Delta_k$ are rank-one matrices and by (\ref{concentration1}),
\begin{equation}\label{eq_upsiolonbound}
\Delta_k=\OO_{\prec} \left(n^{\zeta_k}\right), \ \Upsilon_k=\OO_{\prec}\left( n^{\frac{\zeta_k-1}{2}} \right).
\end{equation}
In light of (\ref{eq_decompoistionp1p2}), we can write
\begin{align}\label{eq_p1p2spikedecomposition}
\Pb_1 \Pb_2=\prod_{k=1}^2 (\Tb_k+\Delta_k+\Upsilon_k).
\end{align}    
 We can further write 
\begin{align}\label{eq_defnt1t2}
\Pb_1 \Pb_2=\Tb_1 \Tb_2+\Rb_1+\Rb_2,
\end{align}
where 
\begin{equation*}
\Rb_1:=\Tb_1 \Delta_2+\Delta_1 \Pb_2\ \ \mbox{and}\ \ 
\Rb_2:=\Tb_1 \Upsilon_2+\Upsilon_1 \Pb_2. 
\end{equation*}
On one hand, it is easy to see that $\operatorname{rank}(\Rb_1) \leq 2.$ On the other hand, by (\ref{eq_upsiolonbound}), (\ref{concentration2}) and Lemma \ref{lem_rigiditysamplecovariance}, using the assumption that $\zeta_1 \geq \zeta_2,$ we obtain that
\begin{equation}\label{eq_r2bound}
\Rb_2=\OO_{\prec}(n^{\frac{\zeta_1-1}{2}}). 
\end{equation}
Denote the spectral decompositions of $\Tb_1$ and $\Tb_2$ as 
\begin{equation}\label{eq_defnp1p2}
\Tb_1=\Ub_1 \bm{\Lambda}_1 \Ub_1^\top, \ \Tb_2=\Ub_2 \bm{\Lambda}_2 \Ub_2^\top.
\end{equation}  
Let $\{a_k\}$ and $\{b_k\}$  be the quantiles of $\nu_1$ and $\nu_2,$ respectively  as constructed via (\ref{eq_rigiditydefinition}). Let $\{\lambda_i\}$ be the eigenvalues of $\Tb_1$. For some small $\epsilon>0,$ we denote an event
\begin{equation}\label{eq_highprobabilitysets}
 \Xi:=
 \left\{\sup_{i \geq 1}|\lambda_i-a_i| \leq \widetilde{i}^{-1/3} n^{-2/3+\epsilon} \right\} , \ \widetilde{i}:=\min\{(p_1-1) \wedge n+1-j,j\}.  
\end{equation}
Since $\Wb$ is a Gaussian random matrix, we have that $\Ub_2$ is a Haar orthogonal random matrix. Since $\Zb$ and $\Wb$ are independent, we have that $\Ub:=\Ub_2^\top \Ub_1$ is also a Haar orthogonal random matrix when $\Ub_1$ is fixed. Since 
 Lemma \ref{lem_rigiditysamplecovariance} implies that $\Xi$ is a high probability event, in what follows, we focus our discussion on the high probability event $\Xi$ and $\Ub_1$ is a deterministic orthonormal matrix.

 On one hand, $\Tb_1 \Tb_2$ have the same eigenvalues as $\bm{\Lambda}_2 \Ub \bm{\Lambda}_1 \Ub^\top$ by construction. On the other hand, by Lemma \ref{lem_rigiditysamplecovariance}, we have that for $\Hb:=\Sigma_2 \Ub \Sigma_1 \Ub^\top$ 
\begin{equation*}
 \left\| \bm{\Lambda}_2 \Ub \bm{\Lambda}_1 \Ub^\top-\Hb \right\| \prec n^{-2/3},  
\end{equation*}     
where $\Sigma_1$ and $\Sigma_2$ are diagonal matrices  containing $\{a_k\}$ and $\{b_k\},$ respectively. Note that the rigidity of the eigenvalues of $\Hb$ has been studied in \cite{DJ} under the Gaussian assumption and summarized in Lemma \ref{lem_multiplication}. Together with Lemma \ref{lem_multiplication}, we conclude that for $i \geq 1$
\begin{equation}\label{eq_edgeedge}
\left| \lambda_i(\Tb_1 \Tb_2)- \gamma_{\nu_1 \boxtimes \nu_2}(i) \right| \prec n^{-2/3}.
\end{equation}
Note that
\begin{align}\label{eq_n2ndecomposition}
n^2 \Nb=&n^2 \Db_1^{-1} (\Wb_1-\rSh_1) (\Wb_2-\rSh_2) \Db_2^{-1} \\
&+ n^2 \Db_1^{-1} \left(  \Wb_1 \rSh_2+\rSh_1 \Wb_2-\rSh_1 \rSh_2  \right)  \Db_2^{-1}. \nonumber
\end{align}
We then analyze the rank of $\Wb_1 \rSh_2+\rSh_1 \Wb_2 +\rSh_1 \rSh_2.$ Recall that for any compatible  matrices $A$ and $B,$ we have that
\begin{equation*}
\operatorname{rank}(AB) \leq \min\{\operatorname{rank}(A), \operatorname{rank}(B)\}. 
\end{equation*}
Since $\operatorname{rank}(\rSh_1) \leq 3$ and $\operatorname{rank}(\rSh_2) \leq 3,$ we conclude that 
\begin{equation*}
\operatorname{rank}(\Wb_1 \rSh_2+ \rSh_1 \Wb_2+\rSh_1 \rSh_2) \leq 6. 
\end{equation*}
Consequently, we have that
\begin{equation}\label{eq_rank6matrix}
\operatorname{rank}\left( \Rb\right) \leq 6, \ \mbox{where}\ \ \Rb:=n^2 \Db_1^{-1} \left(  \Wb_1 \rSh_2+\rSh_1 \Wb_2-\rSh_1 \rSh_2  \right)  \Db_2^{-1}.  
\end{equation} 
By (\ref{eq_p1p2spikedecomposition}), (\ref{eq_defnt1t2}), (\ref{eq_n2ndecomposition}) and (\ref{eq_rank6matrix}), utilizing  (\ref{eq_diagonalconvergence}), we obtain that 
\begin{equation}\label{eq_expanexpan}
n^2 \Nb=\frac{1}{\exp(-4 \upsilon)} \Tb_1 \Tb_2+ n^2 \Db_1^{-1} \Rb_1 \Db_2^{-1}+\Rb+\OO_{\prec}(n^{\frac{\zeta_1-1}{2}}),
\end{equation}
where we used $\| \Tb_1 \| \prec 1, \|  \Tb_2 \| \prec 1.$
Since $\operatorname{rank}(\Rb) \leq 6$, $\operatorname{rank}(\Rb_1) \leq 2,$ by (\ref{eq_edgeedge}),  we have finished our proof for case (1). 

\vspace{6pt}

\noindent {\bf Case (2). \underline{$0 \leq \zeta_2<0.5 \leq \zeta_1<1$}}. Recall (\ref{eq_decompoistionp1p2}). In this case, according to Lemma \ref{lem_conmatrix}, we require a  high order expansion up to the degree of $\mathfrak{d}_1$ in (\ref{eq_defns1s2}) for $\Wb_1.$ Recall (\ref{Definition: zeta in Scenario II}) and (\ref{eq_p1p2spikedecomposition}). By Lemma \ref{lem_conmatrix},  we have that 
\begin{equation}\label{eq_decompositionwxslowly}
\Wb_1-\rSh_\sd-\rSh_1=\Pb_1+\OO_{\prec} \left(n^{\mathsf{e}_1}\right),
\end{equation} 
where $\rSh_1$ is defined in (\ref{eq_defnelemntaryquantityuselater}) and $\rSh_\sd$ is defined in (\ref{eq_defnshd}) below satisfying $\operatorname{rank}(\rSh_\sd) \leq \mathsf{s}_1, \ 4 \leq \mathsf{s}_1 \leq C4^{\mathfrak{d}_1}.$ Using a decomposition similar to (\ref{eq_n2ndecomposition}) with (\ref{eq_decompositionwxslowly}), by (\ref{eq_upsiolonbound}) and the assumption $\zeta_1 \geq \zeta_2$, we obtain that 
\begin{align}
n^2 \Nb =&\, n^2 \Db_1^{-1} \left(\Tb_{1}+\Delta_{1}+\Upsilon_1+\OO_{\prec}(n^{\mathsf{e}_1}) \right) (\Tb_2+\Delta_2+\Upsilon_2+\OO_{\prec}(n^{-1/2})) \Db_2^{-1} \nonumber \\
& + n^2 \Db_1^{-1} \left(  (\Wb_1-\rSh_1-\rSh_\sd)\rSh_2+(\rSh_1+\rSh_\sd) \Wb_2 \right)  \Db_2^{-1} \nonumber \\
=&\, n^2 \Db_1^{-1} \left(\Tb_{1}+\OO_{\prec}(n^{\mathsf{e}_1}) \right) (\Tb_2+\OO_{\prec}(n^{-1/2})) \Db_2^{-1}\label{eq_slowlydivergentdecompose}\\
&+ n^2 \Db_1^{-1} \Delta_{1} (\Pb_2+\OO_{\prec}(n^{-1/2})) \Db_2^{-1} \nonumber \\
&+n^2\Db_1^{-1} \Tb_1 (\Delta_2+\OO_{\prec}(n^{-1/2}))\Db_2^{-1}  \nonumber \\
&+  n^2 \Db_1^{-1} \left(  (\Wb_1-\rSh_1-\rSh_\sd)\rSh_2+(\rSh_1+\rSh_\sd) \Wb_2 \right)  \Db_2^{-1}+\OO_{\prec}(n^{\frac{\zeta_1-1}{2}}).\nonumber
\end{align}
 It is easy to see that the rank of the second to the fourth terms  of (\ref{eq_slowlydivergentdecompose}) is bounded by $\mathsf{s}_1+8.$ On other hand, by (\ref{eq_concentrationdiagonal}), the first inequality of (\ref{eq_diagonalconvergence}) should be replaced by
\begin{equation}\label{eq_slowlydivergentdiagonalconcetrantion}
\left\|n(\Db_1)^{-1}-\exp( 2\upsilon) \Ib \right\|=\OO_{\prec}(n^{\zeta_1-1}).
\end{equation}
The rest of the discussion follows from the case (1). This completes our proof for case (2). 

\vspace{6pt}

\noindent {\bf Case (3). \underline{$0.5 \leq \zeta_2 \leq \zeta_1<1$}}. The discussion is similar to case (2) except that we also need to conduct a high order expansion for $\Wb_2.$ Similar to (\ref{eq_decompositionwxslowly}), by Lemma \ref{lem_conmatrix}, we have that 
\begin{equation}\label{wyslowlyccc}
\Wb_2-\widetilde{\rSh}_\sd-\rSh_2=\Pb_2+\OO_{\prec} \left(n^{\mathsf{e}_2}\right).
\end{equation}
By decomposition similar to (\ref{eq_slowlydivergentdecompose}), with (\ref{wyslowlyccc}), by (\ref{eq_upsiolonbound}), we have that 
\begin{align*}
n^2 \Nb=&\,n^2 \Db_1^{-1} \left( \Tb_{1}+\OO_{\prec}(n^{\mathsf{e}_1}) \right) \left(\Tb_{2}+\OO_{\prec}(n^{\mathsf{e}_2}) \right) \Db_2^{-1} \\
&+n^2 \Db_1^{-1} \left( (\Delta_{1}+\rSh_\sd+\rSh_1) \Wb_2+(\Wb_1-\Pb_1-\rSh_\sd- \rSh_1) (\widetilde{\rSh}_\sd+\Delta_2+\rSh_2) \right) \Db_2^{-1}\\
&+\OO_{\prec}(n^{\frac{\zeta_1-1}{2}}). 
\end{align*}  
On one hand,  the rank of the second term of right-hand side of the above equation can be bounded by $\mathsf{s}_1+\mathsf{s}_2+8.$ On the other hand, the first term can be again analyzed in the same way as  heading from (\ref{eq_highprobabilitysets}) to (\ref{eq_edgeedge}) using Lemma \ref{lem_multiplication}. Finally, by (\ref{eq_concentrationdiagonal}), similar to (\ref{eq_slowlydivergentdiagonalconcetrantion}), we have that
\begin{equation}\label{eq_slowlydivergentdiagonalconcetrantion1}
\left\|n(\Db_2)^{-1}-\exp( 2\upsilon) \Ib \right\|=\OO_{\prec}(n^{\zeta_2-1}).
\end{equation}
The rest of the proof follows from the discussion of case (1). This completes the proof of Case (3) using the fact $\zeta_2 \leq \zeta_1.$
\end{proof}

\subsection{Proof of Theorem \ref{thm_secenario2slowly}}
We now prove Theorem \ref{thm_secenario2slowly} when $0 \leq \zeta_2<1 \leq \zeta_1<\infty$.

{\bf Case (1). \underline{$0 \leq \zeta_2<0.5$}.}  Decompose $\Nb$ by
\begin{align}\label{eq_decompositionNmoderate}
n \Nb=\Ab_1 n(\Wb_2-\rSh_2) \Db_2^{-1}+n \Ab_1\rSh_2 \Db_2^{-1}.
\end{align}
First, we have that $\operatorname{rank}(n \Ab_1\rSh_2 \Db_2^{-1})\leq 3. $ Moreover, using the decomposition (\ref{eq_decompoistionp1p2}), similar to (\ref{eq_expanexpan}), by (\ref{eq_upsiolonbound}), we can further write that 
\begin{equation}\label{eq_decompositionNmoderate1}
n \Nb=n\Ab_1 \Tb_1 \Db_2^{-1}+n \Ab_1\rSh_2 \Db_2^{-1}+n \Ab_1 \Delta_2 \Db_2^{-1}+\OO_{\prec}(n^{\frac{\zeta_2-1}{2}}).
\end{equation}
Second, by Lemma \ref{lem_2ndpaperresults}, we have that 
\begin{equation}\label{eq_axaxtildeconverge}
\left\| \Ab_1-\widetilde{\Ab}_{1,s} \right\| \prec n^{-1/2}, \ \  \left\| \Ab_1-\widetilde{\Ab}_{1,c} \right\| \prec n^{-\alpha/2}+n^{-3/2}. 
\end{equation}
Together with (\ref{eq_diagonalconvergence}) and the fact that $\|\Tb_1\|=\OO_{\prec}(1)$, using the definition (\ref{eq_defnM1M2}),  we have that 
\begin{equation}\label{eq_moderatelyly}
\left\| \Ab_1 n \Tb_1 \Db_2^{-1}-\widetilde{\Nb} \right\| \prec n^{-1/2}. 
\end{equation}
We can therefore conclude the proof  using (\ref{eq_decompositionNmoderate1}).  

Next, when $\zeta_1$ is larger in the sense of (\ref{eq_largeralphadefinition}), by Lemma \ref{lem_2ndpaperresults}, we find that with probability at least $1-\OO\left(n^{1-\delta(\zeta_1-1)/2} \right)$, for some constant $C>0,$
\begin{equation}\label{eq_convergenidentity}
\left\|\Ab_1-\Ib \right\| \leq C n \exp\left(-\upsilon (\sigma_1^2/n )^{1-\delta} \right).
\end{equation}  
Consequently, we have 
\begin{equation}\label{eq_fastonenoise}
\left\| n\Nb-n \Ab_2 \right\| \leq \left\| \Ab_1-\Ib \right\| \left\| n\Ab_2 \right\| \leq  n^2 \exp\left(-\upsilon (\sigma_1^2/n )^{1-\delta} \right),
\end{equation}
where in the second inequality we use the fact that $\|\Ab_2  \| \prec 1$ since $\lambda_1(\Ab_2)=1$. By a result analogous to (\ref{normalkernelmatrixresults}) for $\Ab_2$, we have that for $i>3,$ 
\begin{equation}\label{eq_fastdivergentregime}
\left|\lambda_i(n \Ab_2)-\exp(2 \upsilon) \gamma_{\nu_2}(i) \right| \prec n^{-1/2}. 
\end{equation} 
Together with (\ref{eq_fastonenoise}), we conclude our proof. 
 
\vspace{6pt}

\noindent {\bf Case (2). \underline{$0.5 \leq \zeta_2<1$}.} The discussion is similar to case (1) except that we need to conduct a high order expansion for $\Ab_2.$ Note that 
\begin{align*}
n \Nb=\Ab_1 n(\Wb_2-\rSh_2-\widetilde{\rSh}_\sd) \Db_2^{-1}+n \Ab_1 (\rSh_2+\widetilde{\rSh}_\sd) \Db_2^{-1}.
\end{align*}
By (\ref{eq_upsiolonbound}), (\ref{wyslowlyccc}) and (\ref{eq_slowlydivergentdiagonalconcetrantion1}), we have that 
\begin{align*}
n \Nb=\exp(2 \upsilon )\Ab_1 \Tb_2+n \Ab_1 \Delta_1 \Db_2^{-1}+\OO_{\prec}(n^{\frac{\zeta_2-1}{2}}+n^{\mathsf{e}_2})+n \Ab_1 (\rSh_2+\widetilde{\rSh}_\sd) \Db_2^{-1}.
\end{align*}
We can therefore conclude our proof by (\ref{eq_axaxtildeconverge}) with a discussion similar to (\ref{eq_moderatelyly}).  Finally, when $\zeta_1$ is larger,   we can conclude our proof using a discussion similar to (\ref{eq_fastdivergentregime}) with (\ref{eq_fastonenoise}) and Lemma \ref{lem_2ndpaperresults}. Together with (\ref{eq_slowlydivergentdiagonalconcetrantion1}),   we conclude the proof.   

\subsection{Proof of Theorem \ref{thm_secenario2moderately}}
We now prove Theorem \ref{thm_secenario2moderately} when $\zeta_1 \geq \zeta_2 \geq 1.$ 
For part (1), (\ref{eq_thm35eqone}) follows from (\ref{eq_onematrixbound}) and an analogous result for $\Ab_2$ that
\begin{equation}\label{eq_anaresult}
\left\| \Ab_2-\widetilde{\Ab}_{2,s} \right\| \prec n^{-1/2},
\end{equation}
as well as the facts that $\| \Ab_1 \|=\OO_{\prec}(1), \| \Ab_2 \|=\OO_{\prec}(1).$ Second, (\ref{eq_thm35eqtwo}) follows from (\ref{eq_onematrixboundoneone}), (2) of Lemma \ref{lem_hardmard} and the assumption that $\zeta_2 \leq \zeta_1.$ 
For part (2) and (3), the proof  follows from (1) of Lemma \ref{lem_2ndpaperresults} and its counterpart for $\Ab_2.$

\subsection{Proof of Theorem \ref{thm_mainbandwidthtwo}}
To prove the results of Theorem \ref{thm_mainbandwidthtwo},
we first study the adaptive bandwidth $h_1$ and $h_2$. When $0 \leq \zeta_2<1,$ 
by Lemma \ref{lem_con} about the sub-Gaussian random vector, we have that for $i \neq j,$
\begin{equation*}
\| \yb_i-\yb_j \|^2=2(p_2+\sigma_2^2)+\OO_{\prec}(p_2^{\zeta_2}+\sqrt{p_2}).
\end{equation*}
Since $\zeta_2<1,$  $\| \yb_i-\yb_j \|_2^2$ are concentrated around $2p_2$. Then for any $\omega \in (0,1)$  and $h_2$ chosen according to (\ref{eq_hxhyadaptivechoise}), we have that
\begin{equation}\label{eq_asymptoticrelation}
 h_2 \asymp p_2.
\end{equation}
Similarly, when $0 \leq \zeta_1<1,$ we have that $ h_1 \asymp p_1.$ 
Now we can prove part (1) when $0 \leq \zeta_1<1$. Denote 
\begin{equation*}
f\left( \frac{\| \xb_i-\xb_j \|_2^2}{h_1} \right)=g_1\left( \frac{\| \xb_i-\xb_j \|_2^2}{p_1} \right), \ f(x)=\exp(-\upsilon x),
\end{equation*}
where $g_k(x):=f(p_k x/h_k), \ k=1,2.$ Since
$\frac{p_k}{h_k} \asymp 1$,
we can apply the proof of Theorem \ref{thm_senarioiionesubcritical} to the kernel functions $g_k(x), k=1,2. $ The only difference is that the constant $\upsilon$ is now replaced by $\upsilon p_k/h_k.$ When $\zeta_1 \geq 1,$ the modification is similar except that we also need to use (2) of Lemma \ref{lem_2ndpaperresults}.

The other two cases can be obtained similarly by recalling the following fact. For any $\zeta_1 \geq 1,$ let $h_1$ be the bandwidth selected using (\ref{eq_hxhyadaptivechoise}), we have that  
for some constants $C_1, C_2>0,$ with high probability
\begin{equation}
C_1(\sigma_1^2 \log^{-1}n+p_1) \leq h_1 \leq C_2 \sigma_1^2 \log^2n.
\end{equation}
Also, note that (2) of Lemma \ref{lem_2ndpaperresults} holds. See Corollary 3.2 of \cite{DW2} for the proof.
With this fact, 
for part (2), (\ref{eq_thm41eqone}) follows from \eqref{eqa}
and its counterpart for $\Ab_2$ and the fact $\| \Ab_1 \| \prec 1, \| \Ab_2 \| \prec 1;$ (\ref{eq_thm41eqtwo}) follows from (\ref{eqb}) and its counterpart for $\Ab_2$ and (2) of Lemma \ref{lem_hardmard}; (\ref{eq_thm41eqthree}) follows from (\ref{eqc}) and its counterpart for $\Ab_2$ and the assumption $\zeta_1 \geq \zeta_2.$

\bibliographystyle{abbrv}
\bibliography{sensornonnull}

\begin{thebibliography}{10}

\bibitem{BHPZ}
Z.~Bao, J.~Hu, G.~Pan, and W.~Zhou.
\newblock {Canonical correlation coefficients of high-dimensional Gaussian
  vectors: Finite rank case}.
\newblock {\em The Annals of Statistics}, 47(1):612 -- 640, 2019.

\bibitem{BELKIN20081289}
M.~Belkin and P.~Niyogi.
\newblock Towards a theoretical foundation for {L}aplacian-based manifold
  methods.
\newblock {\em Journal of Computer and System Sciences}, 74(8):1289--1308,
  2008.

\bibitem{MR3044473}
C.~Bordenave.
\newblock On {E}uclidean random matrices in high dimension.
\newblock {\em Electron. Commun. Probab.}, 18:no. 25, 8, 2013.

\bibitem{CS}
X.~Cheng and A.~Singer.
\newblock The spectrum of random inner-product kernel matrices.
\newblock {\em Random Matrices: Theory and Applications}, 02(04):1350010, 2013.

\bibitem{DJ}
X.~{Ding} and H.~C. {Ji}.
\newblock {Local laws for multiplication of random matrices and spiked
  invariant model}.
\newblock {\em arXiv preprint arXiv 2010.16083}, 2020.

\bibitem{DW2}
X.~{Ding} and H.-T. {Wu}.
\newblock {Impact of signal-to-noise ratio and bandwidth on graph Laplacian
  spectrum from high-dimensional noisy point cloud}.
\newblock {\em arXiv preprint arXiv 2011.10725}, 2020.

\bibitem{DW1}
X.~{Ding} and H.~T. {Wu}.
\newblock On the spectral property of kernel-based sensor fusion algorithms of
  high dimensional data.
\newblock {\em IEEE Transactions on Information Theory}, 67(1):640--670, 2021.

\bibitem{DV}
Y.~Do and V.~Vu.
\newblock The spectrum of random kernel matrices: Universality results for
  rough and varying kernels.
\newblock {\em Random Matrices: Theory and Applications}, 02(03):1350005, 2013.

\bibitem{7558246}
D.~{Dov}, R.~{Talmon}, and I.~{Cohen}.
\newblock Kernel-based sensor fusion with application to audio-visual voice
  activity detection.
\newblock {\em IEEE Transactions on Signal Processing}, 64(24):6406--6416,
  2016.

\bibitem{8281539}
D.~{Dov}, R.~{Talmon}, and I.~{Cohen}.
\newblock Sequential audio-visual correspondence with alternating diffusion
  kernels.
\newblock {\em IEEE Transactions on Signal Processing}, 66(12):3100--3111,
  2018.

\bibitem{dunson2019diffusion}
D.~B. Dunson, H.-T. Wu, and N.~Wu.
\newblock Diffusion based {G}aussian process regression via heat kernel
  reconstruction.
\newblock {\em Applied and Computional Harmonic Analysis}, 2021.

\bibitem{informationplusenoise}
N.~El~Karoui.
\newblock {On information plus noise kernel random matrices}.
\newblock {\em The Annals of Statistics}, 38(5):3191 -- 3216, 2010.

\bibitem{NEKkernel}
N.~El~Karoui.
\newblock {The spectrum of kernel random matrices}.
\newblock {\em The Annals of Statistics}, 38(1):1 -- 50, 2010.

\bibitem{NEKW2}
N.~El~Karoui and H.-T. Wu.
\newblock {Graph connection Laplacian and random matrices with random blocks}.
\newblock {\em Information and Inference: A Journal of the IMA}, 4(1):1--44,
  2015.

\bibitem{NEKW}
N.~El~Karoui and H.-T. Wu.
\newblock {Graph connection Laplacian methods can be made robust to noise}.
\newblock {\em The Annals of Statistics}, 44(1):346 -- 372, 2016.

\bibitem{erdos2017dynamical}
L.~Erd{\H{o}}s and H.~Yau.
\newblock {\em A Dynamical Approach to Random Matrix Theory}.
\newblock Courant Lecture Notes. American Mathematical Society, 2017.

\bibitem{MR3916104}
Z.~Fan and A.~Montanari.
\newblock The spectral norm of random inner-product kernel matrices.
\newblock {\em Probab. Theory Related Fields}, 173(1-2):27--85, 2019.

\bibitem{MR4130541}
N.~Garc\'ia~Trillos, M.~Gerlach, M.~Hein, and D.~Slepcev.
\newblock Error estimates for spectral convergence of the graph {L}aplacian on
  random geometric graphs toward the {L}aplace-{B}eltrami operator.
\newblock {\em Found. Comput. Math.}, 20(4):827--887, 2020.

\bibitem{gustafsson2012statistical}
F.~Gustafsson.
\newblock {\em Statistical Sensor Fusion}.
\newblock Professional Publishing House, 2012.

\bibitem{6788402}
D.~R. {Hardoon}, S.~{Szedmak}, and J.~{Shawe-Taylor}.
\newblock Canonical correlation analysis: An overview with application to
  learning methods.
\newblock {\em Neural Computation}, 16(12):2639--2664, 2004.

\bibitem{10.1007/11503415_32}
M.~Hein, J.-Y. Audibert, and U.~von Luxburg.
\newblock From graphs to manifolds -- weak and strong pointwise consistency of
  graph {L}aplacians.
\newblock In P.~Auer and R.~Meir, editors, {\em Learning Theory}, pages
  470--485, 2005.

\bibitem{MR2332434}
M.~Hein, J.-Y. Audibert, and U.~von Luxburg.
\newblock Graph {L}aplacians and their convergence on random neighborhood
  graphs.
\newblock {\em J. Mach. Learn. Res.}, 8:1325--1368, 2007.

\bibitem{Horst1961}
P.~Horst.
\newblock Relations among m sets of measures.
\newblock {\em Psychometrika}, 26(2):129--149, 1961.

\bibitem{hotelling1936relations}
H.~Hotelling.
\newblock Relations between two sets of variates.
\newblock {\em Biometrika}, 28:321--377, 1936.

\bibitem{Hwang2013}
H.~Hwang, K.~Jung, Y.~Takane, and T.~S. Woodward.
\newblock A unified approach to multiple-set canonical correlation analysis and
  principal components analysis.
\newblock {\em British Journal of Mathematical and Statistical Psychology},
  66(2):308--321, 2013.

\bibitem{Jipaper}
H.~C. Ji.
\newblock {Regularity Properties of Free Multiplicative Convolution on the
  Positive Line}.
\newblock {\em International Mathematics Research Notices}, 07 2020.
\newblock rnaa152.

\bibitem{johnstone2001}
I.~M. Johnstone.
\newblock On the distribution of the largest eigenvalue in principal components
  analysis.
\newblock {\em Ann. Statist.}, 29(2):295--327, 2001.

\bibitem{KR}
S.~P. Kasiviswanathan and M.~Rudelson.
\newblock Spectral norm of random kernel matrices with applications to privacy.
\newblock In {\em Approximation, Randomization, and Combinatorial Optimization.
  Algorithms and Techniques, {APPROX/RANDOM} 2015, August 24-26, 2015,
  Princeton, NJ, {USA}}, volume~40 of {\em LIPIcs}, pages 898--914. Schloss
  Dagstuhl - Leibniz-Zentrum f{\"{u}}r Informatik, 2015.

\bibitem{MR3704770}
A.~Knowles and J.~Yin.
\newblock Anisotropic local laws for random matrices.
\newblock {\em Probab. Theory Related Fields}, 169(1-2):257--352, 2017.

\bibitem{7214350}
D.~{Lahat}, T.~{Adali}, and C.~{Jutten}.
\newblock Multimodal data fusion: An overview of methods, challenges, and
  prospects.
\newblock {\em Proceedings of the IEEE}, 103(9):1449--1477, 2015.

\bibitem{LEDERMAN2018509}
R.~R. Lederman and R.~Talmon.
\newblock Learning the geometry of common latent variables using
  alternating-diffusion.
\newblock {\em Applied and Computational Harmonic Analysis}, 44(3):509--536,
  2018.

\bibitem{lederman2018learning}
R.~R. Lederman and R.~Talmon.
\newblock Learning the geometry of common latent variables using
  alternating-diffusion.
\newblock {\em Applied and Computational Harmonic Analysis}, 44(3):509--536,
  2018.

\bibitem{lindenbaum2018multiview}
O.~Lindenbaum, Y.~Bregman, N.~Rabin, and A.~Averbuch.
\newblock Multiview kernels for low-dimensional modeling of seismic events.
\newblock {\em IEEE Transactions on Geoscience and Remote Sensing},
  56(6):3300--3310, 2018.

\bibitem{lindenbaum2020multi}
O.~Lindenbaum, A.~Yeredor, M.~Salhov, and A.~Averbuch.
\newblock Multi-view diffusion maps.
\newblock {\em Information Fusion}, 55:127--149, 2020.

\bibitem{liu2020diffuse}
G.-R. Liu, Y.-L. Lo, J.~Malik, Y.-C. Sheu, and H.-T. Wu.
\newblock Diffuse to fuse eeg spectra--intrinsic geometry of sleep dynamics for
  classification.
\newblock {\em Biomedical Signal Processing and Control}, 55:101576, 2020.

\bibitem{2021arXiv210203297M}
Z.~{Ma} and F.~{Yang}.
\newblock {Sample canonical correlation coefficients of high-dimensional random
  vectors with finite rank correlations}.
\newblock {\em arXiv preprint arXiv 2102.03297}, 2021.

\bibitem{MP}
V.~A. Marchenko and L.~A. Pastur.
\newblock Distribution of eigenvalues for some sets of random matrices.
\newblock {\em Mathematics of the USSR-Sbornik}, 1(4):457--483, 1967.

\bibitem{marshall2018time}
N.~F. Marshall and M.~J. Hirn.
\newblock Time coupled diffusion maps.
\newblock {\em Applied and Computational Harmonic Analysis}, 45(3):709--728,
  2018.

\bibitem{NCCApaper}
T.~Michaeli, W.~Wang, and K.~Livescu.
\newblock Nonparametric canonical correlation analysis.
\newblock In {\em Proceedings of the 33rd International Conference on
  International Conference on Machine Learning - Volume 48}, ICML'16, page
  1967–1976, 2016.

\bibitem{PY}
N.~S. Pillai and J.~Yin.
\newblock {Universality of covariance matrices}.
\newblock {\em The Annals of Applied Probability}, 24(3):935 -- 1001, 2014.

\bibitem{samuels2007brain}
M.~A. Samuels.
\newblock The brain--heart connection.
\newblock {\em Circulation}, 116(1):77--84, 2007.

\bibitem{MR4010764}
T.~Shnitzer, M.~Ben-Chen, L.~Guibas, R.~Talmon, and H.-T. Wu.
\newblock Recovering hidden components in multimodal data with composite
  diffusion operators.
\newblock {\em SIAM J. Math. Data Sci.}, 1(3):588--616, 2019.

\bibitem{shnitzer2019recovering}
T.~Shnitzer, M.~Ben-Chen, L.~Guibas, R.~Talmon, and H.-T. Wu.
\newblock Recovering hidden components in multimodal data with composite
  diffusion operators.
\newblock {\em SIAM Journal on Mathematics of Data Science}, 1(3):588--616,
  2019.

\bibitem{SINGER2006128}
A.~Singer.
\newblock From graph to manifold laplacian: The convergence rate.
\newblock {\em Applied and Computational Harmonic Analysis}, 21(1):128--134,
  2006.

\bibitem{TALMON2019848}
R.~Talmon and H.-T. Wu.
\newblock Latent common manifold learning with alternating diffusion: Analysis
  and applications.
\newblock {\em Applied and Computational Harmonic Analysis}, 47(3):848--892,
  2019.

\bibitem{talmon2019latent}
R.~Talmon and H.-T. Wu.
\newblock Latent common manifold learning with alternating diffusion: analysis
  and applications.
\newblock {\em Applied and Computational Harmonic Analysis}, 47(3):848--892,
  2019.

\bibitem{DANORIGINALPAPER}
D.~Voiculescu.
\newblock Multiplication of certain non-commuting random variables.
\newblock {\em Journal of Operator Theory}, 18(2):223--235, 1987.

\bibitem{xiao2019manifold}
L.~Xiao, J.~M. Stephen, T.~W. Wilson, V.~D. Calhoun, and Y.-P. Wang.
\newblock A manifold regularized multi-task learning model for iq prediction
  from two fmri paradigms.
\newblock {\em IEEE Transactions on Biomedical Engineering}, 67(3):796--806,
  2019.

\bibitem{ZHAO201743}
J.~Zhao, X.~Xie, X.~Xu, and S.~Sun.
\newblock Multi-view learning overview: Recent progress and new challenges.
\newblock {\em Information Fusion}, 38:43--54, 2017.

\bibitem{zhuang2020technical}
X.~Zhuang, Z.~Yang, and D.~Cordes.
\newblock A technical review of canonical correlation analysis for neuroscience
  applications.
\newblock {\em Human Brain Mapping}, 41(13):3807--3833, 2020.

\end{thebibliography}

\end{document}